%% file: main.tex
\documentclass{article}

\usepackage{microtype}
\usepackage{graphicx}
\usepackage{booktabs} 

\usepackage{hyperref}

\usepackage[preprint]{icml2026}

\usepackage{amsmath}
\usepackage{amssymb}
\usepackage{amsthm}
\usepackage{mathtools}
\usepackage{thm-restate}

\usepackage{enumitem}
\usepackage{bm, calc, amsfonts, url, color, epstopdf, nicefrac, bbold}
\usepackage{float}
\usepackage{multirow}
\usepackage{dsfont}
\usepackage[normalem]{ulem}
\usepackage{stmaryrd}
\usepackage[dvipsnames]{xcolor}
\usepackage{mdframed}
\usepackage{comment}
\usepackage{mdframed}
\usepackage{threeparttable}
\usepackage{subcaption}

\usepackage{tabularx}
\usepackage{array}

\usepackage[capitalize,noabbrev]{cleveref}

\theoremstyle{plain}
\newtheorem{theorem}{Theorem}[section]
\newtheorem{proposition}[theorem]{Proposition}

\theoremstyle{definition}

\newtheorem{assumption}[theorem]{Assumption}
\theoremstyle{remark}
\newtheorem{remark}[theorem]{Remark}

\usepackage[textsize=tiny]{todonotes}

\definecolor{orange}{RGB}{255,107,0}

\definecolor{green}{RGB}{51,150,30}

\usepackage{extarrows}
\newcommand{\deq}{\xlongequal{({\sf d})}}

\input{def}

\icmltitlerunning{Content-Style Identification via Differential Independence}

\begin{document}

\twocolumn[
  \icmltitle{Content-Style Identification via Differential Independence
    }



  \icmlsetsymbol{equal}{*}

  \begin{icmlauthorlist}
    \icmlauthor{Subash Timilsina}{sch}
    \icmlauthor{Hoang-Son Nguyen}{sch}
    \icmlauthor{Sagar Shrestha}{sch}
    \icmlauthor{Xiao Fu}{sch}
    
  \end{icmlauthorlist}

  \icmlaffiliation{sch}{School of Electrical Engineering and Computer Science, Oregon State University, Corvallis, Oregon, USA}

  \icmlcorrespondingauthor{Xiao Fu}{xiao.fu@oregonstate.edu}
  
  \icmlkeywords{Multidomain learning, content style learning, identifiability}

  \vskip 0.3in
]



\printAffiliationsAndNotice{}  

\begin{abstract}
Generative analysis often models multi-domain observations as nonlinear mixtures of domain-invariant content variables and domain-specific style variables. Identifying both factors from unpaired domains enables tasks such as domain transfer and counterfactual data generation. Prior work establishes identifiability under (block-wise) statistical independence between content and style, or via sparse Jacobian assumptions on the nonlinear mixing function, but such conditions can be restrictive in practice.
In this work, we introduce content-style differential independence (CSDI), an alternative structural condition requiring that infinitesimal variations in content and style induce orthogonal directions on the data manifold, thereby enabling identifiability even when content and style are dependent and the Jacobian is dense. We operationalize this condition through a blockwise orthogonality constraint on the Jacobian subspaces associated with content and style. To support high-dimensional generative models, we design a stochastic regularizer based on numerical Jacobian approximation, enabling scalable training in settings such as high-resolution image generation. Experiments across multiple datasets corroborate the identifiability analysis and demonstrate practical benefits on counterfactual generation and domain translation.

\end{abstract}

\section{Introduction}

Multi-domain data is often modeled as mixtures of latent content and style variables \cite{huang2018multimodal, choi2020starganv2, lee2020drit++, wang2016deep, wu2019transgaga, kong2022partial}.
In this context,
content refers to the information shared by data across different domains, while style is domain-specific information.
For example, when the two domains of images are male and female human faces, shared facial expressions can be considered as content information, while male/female-specific characteristics are controlled by style variables. Identifying latent
\emph{content} and \emph{style} variables from multi-domain data
has proven useful to many applications, including image translation \cite{huang2018multimodal, lee2020drit++, wu2019transgaga, xie2022multi}, domain generalization \cite{kong2022partial, liu2025latent}, scientific data analysis \cite{yang2021sc}, and causal discovery \cite{sturma2023unpaired, ahuja2024domain}.

\begin{figure}[t!]
    \centering
    \includegraphics[width=\linewidth]{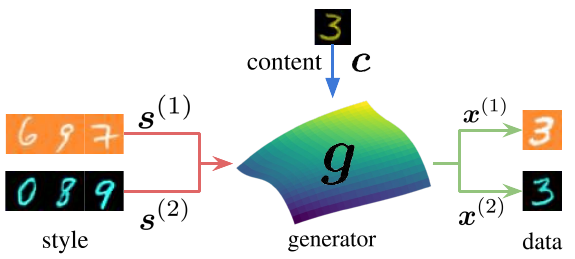}
    \caption{Content-style model for multi-domain data---domain 1: digits written in white and orange background; domain 2: digits written in green and black background; content is digit identity, style is background and digit color.}
    \label{fig:gen_model}
\end{figure}

\subsection{Existing Approaches and Challenges}

Many existing approach for multi-domain content-style identification rely on strong forms  of supervision, such as paired or aligned samples across domains \cite{benaim2019domain, ibrahim2021cell, sorensen2021generalized, von2021self, lyu2022understanding, karakasis2023revisiting, eastwood2023self}.

However, such paired data are not always available in practice, particularly in large-scale or naturally occurring datasets, e.g., single-cell data \cite{yang2021sc, sturma2023unpaired}, art stylizations \cite{zhu2017cyclegan}, or medical images \cite{hervella2019retina, yang2020unsupervised}. This motivates the more challenging \emph{unpaired multi-domain} setting, where samples are only labeled by their domain and no cross-domain correspondences are observed.

Content–style learning from unpaired domains poses a fundamental \emph{identifiability} challenge. In the absence of paired samples, content and style representations may become arbitrarily entangled or degenerate, yielding distribution-matched solutions with incorrect semantic meaning. To resolve this ambiguity, prior work typically assumes \emph{statistical independence} between content and style, or block-wise independence across domains \cite{xie2022multi, kong2022partial, shrestha2025content}. Such assumptions can be restrictive in practice, e.g., in image generation, style of a domain (e.g., illumination) naturally depends on contents (e.g., objects)
\cite{von2021self, scholkopf2021toward}.
More recent approaches impose sparse-influence (or, equivalently, Jacobian sparsity assumptions) on the generative function, requiring style variables to affect fewer features than content variables and their influence supports to be non-overlapping \cite{yan2023counterfactual}. As noted in \cite{nguyen2025dica}, these conditions can be overly stringent for applications such as single-cell genomics data analysis. This raises a fundamental question: \emph{Can content and style be identified from unpaired multi-domain data without assuming statistical independence or Jacobian sparsity?}

\subsection{Contributions}
In this work, we answer this question by using an alternative perspective on content-style disentanglement.
Other than enforcing independence at the level of probability distributions, we propose to exploit the notion of {\it differential independence}. That is, 
we argue that mutually unaffected infinitesimal variations between content and style are sufficient to encode the notion of content-style disentanglement, which is reminiscent of empirical insights in computer vision \cite{rolinek2019vae, kumar2020implicit, peebles2020hessian, wei2021orthogonal} as well as causal learning frameworks such as independent mechanism analysis (IMA) \cite{gresele2021independent, reizinger2022embrace} at block variables levels.

Our detailed contribution is twofold:

\textit{(i) Content-Style Identifiability under Differential Independence.} To represent differential independence between content and style, we impose a \emph{Jacobian-based structural constraint} on the generator, requiring that content-induced and style-induced variations span orthogonal subspaces.
This notion of \emph{differential independence} does not require sparse influence of the latent variables on the ambient space or impose statistical independence on the factors, providing an alternative perspective on content-style learning. We show that, under this condition and a distribution-matching loss across domains, both content and style are identifiable up to invertible transformations.

\textit{(ii) Efficient Implementation for High-dimensional Data.}
To make our learning loss useful for high-dimensional multi-domain learning (e.g., high-resolution multi-domain image generation and transfer), we develop efficient, scalable implementation of the proposed loss.
In particular, we recast the problem under a multi-domain GAN framework.
We propose a spectral random-probing based orthogonality regularization for two subspaces of the learning function's Jacobian, reminiscent of Hutchinson's trace estimation \cite{hutchinson1989stochastic}.
We empirically show that our implementation is scalable, numerically scalable, and effective.

Our identifiability theorem is validated using controlled multi-domain data, as well as high-dimensional multi-domain image transfer and generation tasks.

\section{Background}

\subsection{Multi-Domain Content-Style Model}

We consider a classical multi-domain learning setting, where
data is acquired over $N$ domains $\cX^{(n)} \subseteq \bbR^{d}$, in which $n$ indexes the domain \cite{xie2022multi,kong2022partial,shrestha2025content,yan2023counterfactual}. Each data point from domain $n$ is generated from a shared latent variable $\bm c$ (i.e., content) and a domain-specific variable $\bm s^{(n)}$ (i.e., style) through a smooth and invertible generator $\bm g$.
For each data point $\x^{(n)} \in \cX^{(n)} \subseteq \bbR^{d}$ coming from a domain $n$,
\begin{align}
    \bm x^{(n)} = \bm g(\bm c,\bm s^{(n)}),~ \bm c \sim p_{\bm c}, \bm s^{(n)} \sim p_{\s}^{(n)},
    \label{eq:gen_model}
\end{align}
where the latent representation consists of content variables $\bm c \in \cC$ and style variables $\bm s^{(n)} \in \cS^{(n)}$ where $\mathcal C$ and $\mathcal S^{(n)}$ are simply connected open subsets of $\bbR^{d_C}$ and $\bbR^{d_S}$ . We denote $ \mathcal X \triangleq \bigl\{\, \bm g(\bm c,\bm s^{(n)}) : \bm c\in\mathcal C,\ \bm s^{(n)}\in\mathcal S^{(n)} \,\bigr\}\subseteq \mathbb{R}^{d}$ and $\mathcal S \triangleq \cup_{n=1}^N \mathcal S^{(n)}$.
The generator $\bm g$ is smooth ({first-order differentiable}) and \emph{bijective} in $(\bm c,\bm s^{(n)})$ on its domain, so that $(\bm c,\bm s^{(n)})$ is uniquely determined by $\bm x^{(n)}=\bm g(\bm c,\bm s^{(n)})$. Under this setting, we have $d\geq d_C+d_S$
\cite{von2021self, daunhawer2023identifiability}.

Fig. \ref{fig:gen_model} presents an illustration of the generative model in \eqref{eq:gen_model}. There, the domains are handwritten digits with different color, where
$\bm c$ represents the digit identity, $\s^{(n)}$ the writing style and background color, and $\bm g$ maps the latent codes to the pixel space. This model has been widely adopted in multi-domain representation learning literature, serving for cross-domain counterfactual image generation \cite{huang2018multimodal, lee2020drit++, wu2019transgaga}, text style transfer \cite{john2019text, yang2018text, li2019dast}, and domain adaptation \cite{kong2022partial, liu2025latent}.

\subsection{Content-Style Identifiability}

Under Eq.~\eqref{eq:gen_model}, we hope to identify content $\widehat{\bm c}$ and style $\widehat{\bm s}^{(n)}$ from $\{ \x^{(n)}\}$.
Prior work has studied this identification problem under various settings.

\textbf{Aligned Domains.} When content-shared samples $\{\x_\ell^{(1)},\ldots, \x_\ell^{(N)}\}$ are aligned (where $\ell$ represents sample index), it was shown that learning criteria that enforces $\f(\x_\ell^{(i)}) = \f(\x_\ell^{(j)})$ for all content-shared aligned samples can provably learn content $\mathbf{c}$ under some mild conditions \cite{von2021self, lyu2022understanding, daunhawer2023identifiability, karakasis2023revisiting,eastwood2023self} . In addition, \citet{lyu2022understanding, eastwood2023self} also show identifiability of style $\s^{(n)}$ from such data by exploiting statistical independence between $\bm c$ and $\bm s^{(n)}$.

\textbf{Unaligned Domains.}
In the absence of domain-aligned data, the problem of learning content and style representations becomes more challenging. Most approaches rely on the fact that $\bm c\sim p_{\bm c}$ is shared across domains to come up with parameterized distribution matching criteria, e.g., finding $\bm g$, $\bm c$ and $\bm s^{(n)}$ such that \begin{align*}
    \textstyle \x^{(n)} \deq \bm g(\bm c,\bm s^{(n)}),~\forall n\in [N];
\end{align*}
see \cite{xie2022multi,kong2022partial,shrestha2025content,yan2023counterfactual}.
However, distribution matching alone is not sufficient to underpin model identifiability. The works further imposed component-wise statistical independence \cite{xie2022multi,kong2022partial}, content-style block independence \cite{shrestha2025content}, and Jacobian sparsity of $\bm g$ \cite{yan2023counterfactual} to assist identification.

{\bf Limitations.} Both content-style statistical independence and Jacobian sparsity have their limitations. In many applications, style naturally depends on content: illumination depends on object geometry, or biological variation depending on cell state, violating statistical independence \cite{von2021self, scholkopf2021toward, yan2023counterfactual}.
Jacobian sparsity hinges on the assumption that different latent variables emit non-overlapping influences onto subsets of observed features, which may not always hold in applications, e.g., single-cell data analytics \cite{nguyen2025dica}. These limitations motivate the seek of alternative conditions that enables content-style identification from unaligned domains.

\section{Proposed Method} 

To overcome the limitations, we adopt a differential-geometric perspective on the data manifold $\mathcal X^{(n)}$. Related geometric structure---explicitly or implicitly exploited in prior work on representation learning and generative modeling---has shown strong empirical promise for learning meaningful and disentangled representations \cite{rolinek2019vae, kumar2020implicit, peebles2020hessian, gresele2021independent, reizinger2022embrace, wei2021orthogonal}. We formalize this perspective in the setting of unaligned multi-domain learning and provide rigorous theoretical analysis demonstrating how such geometric structure enables identifiability of content and style factors.

\subsection{Differentially Independent Content and Style}

\textbf{Tangent Decomposition.}
Intuitively, the data manifold $\cX^{(n)}$ is locally formed by mixing content variations and style variations. In particular, for the sample $\bm x^{(n)} = \g(\bm c,\bm s^{(n)})$ in domain $n$,
the tangent space at $\bm x^{(n)}$ characterizes all infinitesimal variations of the data induced by perturbing the latent variables $\bm c$ and $\s^{(n)}$: 
\begin{align*}
    T_{\bm x^{(n)}} \cX^{(n)} &\triangleq \{\bm J_{\bm c} \g(\bm c,\bm s^{(n)})\,\Delta \bm c
    + \bm J_{\bm s^{(n)}} \g(\bm c,\bm s^{(n)})\,\Delta \bm s^{(n)}:\\
    &\Delta \bm c \in \mathcal C, \Delta \bm s^{(n)} \in \cS^{(n)} \}\\
    &= \cR \big(\bm J_{\bm c} \g(\bm c,\bm s^{(n)})\big)\ \oplus\
    \cR \big(\bm J_{\bm s^{(n)}} \g(\bm c,\bm s^{(n)})\big),
\end{align*}
where $\mathcal R(\cdot)$ denotes the range space of a matrix, and $\oplus$ denotes the direct sum of two subspaces.

In other words, every local variation $\Delta \x^{(n)}$ on the data manifold $\cX^{(n)}$ can be decomposed into {content-induced} variation ${\Delta \x_{c}^{(n)}} \in \cR \big( \bm J_{\bm c} \g(\bm c,\bm s^{(n)})\big)$ and {style-induced} variation ${\Delta \x_{s}^{(n)}} \in \cR\big(\bm J_{\bm s^{(n)}} \g({\bf c}, \s^{(n)})\big)$ as ${\Delta \x^{(n)}} = {\Delta \x^{(n)}_c} + {\Delta \x^{(n)}_s}$. Our key assumption imposes an orthogonal relationship between content variations ${\Delta \x_{c}^{(n)}}$ and style variations ${\Delta \x_{s}^{(n)}}$.

\begin{assumption}[\underline{C}ontent-\underline{S}tyle \underline{D}ifferential \underline{I}ndependence (CSDI)]\label{assump:style_subspace}
The content-induced and style-induced tangent subspaces are orthogonal:
\begin{equation}
\mathcal R(\bm J_{\bm c} \bm g(\bm c,\bm s^{(n)}))\ \perp\
\mathcal R(\bm J_{\bm s^{(n)}} \bm g(\bm c,\bm s^{(n)})). \label{eq:range_ortho}
\end{equation}
\end{assumption}

Fig.~\ref{fig:csdi} provides an intuitive visualization of CSDI. Assumption~\ref{assump:style_subspace} decouples \emph{local} variations induced by content and style: even if $\bm s^{(n)}$ may depend on $\bm c$ statistically, the generator responds to infinitesimal changes in $\bm c$ and $\bm s^{(n)}$ through orthogonal directions in data space.
The notion of differential independence interprets disentanglement among block factors from a mutually unaffected local-variation perspective, rather than requiring (potentially overly restrictive) statistical independence between $\bm c$ and $\bm s^{(n)}$.

\begin{figure}[t!]
    \centering
    \includegraphics[width=0.8\linewidth]{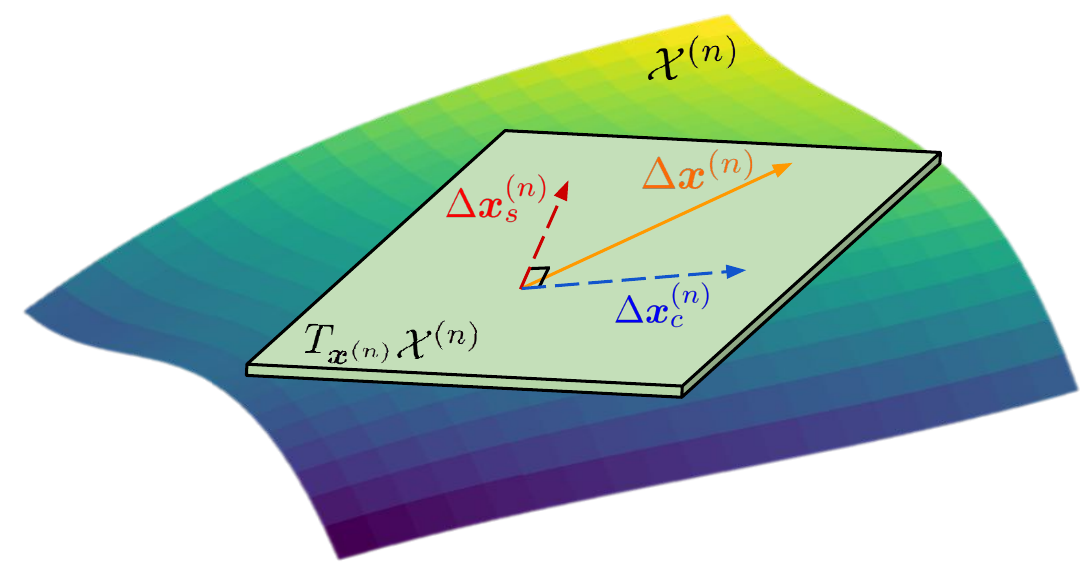}
    \caption{An illustration of CSDI assumption: any local data variation ${\Delta \x^{(n)}} \in T_{\x^{(n)}}\cX^{(n)}$ on the data manifold $\cX^{(n)}$ is a composition of content variation ${\Delta \x_{c}^{(n)}} \in \mathcal R(\bm J_{\bm c} \g(\bm c,\bm s^{(n)}))$ and style variation ${\Delta \x_{s}^{(n)}}$ $\in \mathcal R(\bm J_{\bm s^{(n)}} \g(\bm c,\bm s^{(n)}))$, where $\mathcal R(\bm J_{\bm c} \g(\bm c,\bm s^{(n)}))$ and $\mathcal R(\bm J_{\bm s^{(n)}} \g(\bm c,\bm s^{(n)}))$ are mutually orthogonal.}
    \label{fig:csdi}
\end{figure}

\begin{remark}
Similar concepts to Assumption~\ref{assump:style_subspace} have repeatedly appeared in representation learning frameworks. Specifically, \citet{wei2021orthogonal} propose explicit regularization that enforces orthogonality among the Jacobian columns of the generative function to disentangle semantic factors. Related disentanglement methods, e.g., object-centric learning \cite{burgess2019monet, greff2019iodine, locatello2020sa} and the Hessian penalty \cite{peebles2020hessian}, also encourage orthogonal Jacobians due to their specific constraints.
Orthogonal Jacobian properties are also implicitly promoted in generative models known to induce disentangled representations, including VAEs with diagonal Gaussian priors \cite{rolinek2019vae, reizinger2022embrace} and StyleGAN2 with path-length regularization \cite{karras2020stylegan2}. Nonetheless, explicit identifiability guarantees (including content-style identifiability) have not been established in these works.
\end{remark}

\textbf{CSDI Learning Objective.} 
Our goal is to identify the model and latent content-style blocks in \eqref{eq:gen_model}. Using Assumption~\ref{assump:style_subspace}, we propose the following learning objective:
\begin{subequations}\label{eq:identifiability_prob}
\begin{align}
   & {\rm find}~{\rm invertible}~\widehat{\g},\widehat{\bm c},\{\widehat{\bm s}^{(n)}\}_{i=1}^N \\
    &{\rm s.t.}~\bm x^{(n)} \deq \widehat{\g}(\widehat{\bm c}, \widehat{\s}^{(n)}) \label{eq:dm_data_dec}\\
    &~\bm J_{\widehat{\bm c}}(\widehat{\g}(\widehat{\bm c}, \widehat{\s}^{(n)})) \perp \bm J_{\widehat{\s}^{(n)}}( \widehat{\g}(\widehat{\bm c}, \widehat{\s}^{(n)})),~\forall\, n \in [N]. \label{eq:differential_indep}
\end{align}
\end{subequations}
Constraint \eqref{eq:dm_data_dec} matches the data distributions of learned generator with the data. Here, distribution matching has to be used as no sample level correspondence (for cross-domain samples sharing the same $\bm c\sim p_{\bm c}$) is known across domains.
Constraint \eqref{eq:differential_indep} enforces the differential independence by requiring that
content- and style-induced variations through decoder $\widehat{\g}$ are orthogonal, reflecting
Assumption~\ref{assump:style_subspace}. Similar distribution matching based criteria were used in \cite{xie2022multi,kong2022partial,shrestha2025content}, without the orthogonality constraint in \eqref{eq:differential_indep}.

Under the learning objective in \eqref{eq:identifiability_prob}, we now establish identifiability of both ground-truth content ${\bf c}$ and style $\s^{(n)}$.

\subsection{Identifiability Result}

To move forward, we assume the following:

\begin{assumption}[Domain Variability]\label{assump:domain_difference}
Let $\cA \subseteq {\cal C} \times \cS$ be any measurable set such that $\bbP_{(\bm c, \s^{(n)})}[\cA] > 0, \forall n \in [N]$ and $\cA$ cannot be expressed as $\cB \times \cS$ for any $\cB \subset {\cal C}$. There exists a pair of domains $i_{\cA}, j_{\cA} \in [N]$ such that
\begin{align}
    \bbP_{(\bm c, \s^{(i_{\cA})})}[\cA] \neq \bbP_{(\bm c, \s^{(j_{\cA})})}[\cA]. \label{eq:domain_difference}
\end{align}
\end{assumption}

We remark that for each $\cA$, only one pair of domains $i_{\cA}, j_{\cA}$ is needed to satisfy \eqref{eq:domain_difference}, and $i_{\cA}, j_{\cA}$ can change with another $\cA$. Assumption~\ref{assump:domain_difference} is standard in the content-style identifiability literature \cite{kong2022partial, xie2022multi, shrestha2025content}, which requires the styles to have sufficiently diverse distributions. Intuitively, this assumption implies that two domains $i_{\cA}$ and $j_{\cA}$ with the same content ${\bm c}$ are distinguishable due to the domain variation (i.e., styles).

We show the following:

\begin{restatable}[Identifiability of Content and Style]{theorem}{mainthmnew}\label{thm:identifiability}
Assume that Eq.~\eqref{eq:gen_model} and  Assumption \ref{assump:domain_difference} hold. Let $\widehat{\bm z}=(\widehat{\bm c}, \widehat{\bm s}^{(n)})$ be the latent components learned by solving~\eqref{eq:identifiability_prob}. Then, the learned $\widehat{\bm c}$ satisfies
\begin{equation}
\widehat{\bm c} = \boldsymbol{\gamma}(\bm c),
\end{equation}
where $\boldsymbol{\gamma}: \mathcal C \to \widehat{\mathcal C}$ is an invertible mapping.

If, in addition, Assumption~\ref{assump:style_subspace} holds and the style Jacobian satisfies $\mathrm{rank}(\bm J_{\bm s^{(n)}}\bm g)=d_S$ , then for each domain $n$, the learned $\widehat{\bm s}^{(n)}$ satisfies
\begin{equation}
\widehat{\bm s}^{(n)}=\boldsymbol{\bm \delta}(\bm s^{(n)}),\qquad \forall\, n\in[N],
\end{equation}
where $\bm \delta:{\cal S}\rightarrow \widehat{\cal S} $ is an invertible mapping.
\end{restatable}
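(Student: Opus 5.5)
We split the argument into a content part, which uses only distribution matching and Assumption~\ref{assump:domain_difference}, and a style part, which additionally uses the orthogonality constraint \eqref{eq:differential_indep} together with Assumption~\ref{assump:style_subspace}.

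\textbf{Setting up the transition map.} Since both $\bm g$ and $\widehat{\bm g}$ are invertible and \eqref{eq:dm_data_dec} matches distributions on $\mathcal X^{(n)}$, we define for each domain the map $\bm h^{(n)} \triangleq \widehat{\bm g}^{-1}\circ \bm g$ on $\mathcal C\times\mathcal S^{(n)}$. Write it as $\bm h^{(n)}=(\bm h^{(n)}_c,\bm h^{(n)}_s)$, so that $(\widehat{\bm c},\widehat{\bm s}^{(n)})=\bm h^{(n)}(\bm c,\bm s^{(n)})$. Because $\widehat{\bm g}$ is shared across domains, $\bm h^{(n)}$ is the restriction of the single map $\widehat{\bm g}^{-1}\circ\bm g$ to the relevant domain. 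This map is a differentiable bijection whenever the inverses are smooth.

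\textbf{Content identifiability.} We follow the block-identifiability argument of \cite{kong2022partial,shrestha2025content}. The learned content $\widehat{\bm c}=\bm h_c(\bm c,\bm s^{(n)})$ has the same distribution $p_{\widehat{\bm c}}$ in every domain. We first show that $\bm h_c$ does not depend on $\bm s$, arguing by contradiction.
\begin{itemize}
\item[(i)] Suppose $\partial \bm h_c/\partial \bm s\neq 0$ at some point.
\item[(ii)] By continuity there is an open set $\widehat{\mathcal B}\subset\widehat{\mathcal C}$ whose preimage $\mathcal A=\bm h_c^{-1}(\widehat{\mathcal B})\subseteq\mathcal C\times\mathcal S$ is not of the product form $\mathcal B\times\mathcal S$ and has positive probability under each domain.
\item[(iii)] Distribution matching of $\widehat{\bm c}$ across domains forces $\mathbb P_{(\bm c,\bm s^{(n)})}[\mathcal A]=\mathbb P[\widehat{\bm c}\in\widehat{\mathcal B}]$ to be identical for all $n$. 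This contradicts Assumption~\ref{assump:domain_difference}.
\end{itemize}
Hence $\widehat{\bm c}=\bm\gamma(\bm c)$. Invertibility of $\bm\gamma$ follows because the joint map $\bm h$ is invertible and its Jacobian is block lower-triangular, with the top-left block $\bm J\bm\gamma$. Since $\bm J\bm h$ is nonsingular (assuming diffeomorphism), $\bm J\bm\gamma$ is nonsingular. The step I expect to be the main technical obstacle is the measure-theoretic construction of the non-product set $\mathcal A$ with positive mass in all domains, which needs the simply connected open supports.

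\textbf{Style identifiability.} We apply the chain rule to $\bm g=\widehat{\bm g}\circ\bm h$, writing $\widehat{\bm J}_c,\widehat{\bm J}_s$ for the Jacobians of $\widehat{\bm g}$ with respect to $\widehat{\bm c},\widehat{\bm s}$:
\[
\bm J_{\bm c}\bm g=\widehat{\bm J}_c\,\bm J\bm\gamma+\widehat{\bm J}_s\,\partial_{\bm c}\bm h_s,\qquad
\bm J_{\bm s}\bm g=\widehat{\bm J}_s\,\partial_{\bm s}\bm h_s .
\]
Here we used the previous step, which gives $\partial_{\bm s}\bm h_c=0$. The goal is to show $\partial_{\bm c}\bm h_s=0$.
\begin{itemize}
\item[(i)] Left-multiply the content equation by $(\bm J_{\bm s}\bm g)^\top$.
\item[(ii)] Assumption~\ref{assump:style_subspace} makes the left side vanish.
\item[(iii)] Constraint \eqref{eq:differential_indep} kills the cross term $\widehat{\bm J}_s^\top\widehat{\bm J}_c$.
\item[(iv)] What remains is $(\partial_{\bm s}\bm h_s)^\top\widehat{\bm J}_s^\top\widehat{\bm J}_s\,\partial_{\bm c}\bm h_s=0$.
\end{itemize}
$\widehat{\bm J}_s$ has full column rank, by invertibility of $\widehat{\bm g}$ as an immersion, so $\widehat{\bm J}_s^\top\widehat{\bm J}_s$ is positive definite. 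The rank condition $\mathrm{rank}(\bm J_{\bm s}\bm g)=d_S$ makes $\partial_{\bm s}\bm h_s$ invertible. Together these give $\partial_{\bm c}\bm h_s=0$. On the simply connected domain, $\bm h_s$ therefore depends only on $\bm s$, so $\widehat{\bm s}^{(n)}=\bm\delta(\bm s^{(n)})$. The map $\bm\delta$ is invertible because $\bm h$ is now block-diagonal and bijective. It is a single map across domains because $\bm h$ is the common map $\widehat{\bm g}^{-1}\circ\bm g$.
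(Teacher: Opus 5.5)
Your proposal is correct and follows essentially the same route as the paper. Content identifiability comes from the domain-variability contradiction argument, which the paper simply imports from \cite{shrestha2025content}. Style identifiability comes from combining the two chain-rule identities with CSDI, the learned orthogonality constraint, and the rank condition to force $\partial_{\bm c}\bm h_s=\bm 0$. Your left-multiplication by $(\bm J_{\bm s}\bm g)^\top$ is just an algebraic rephrasing of the paper's range-equality-plus-projection step.

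Two small tightenings. First, full column rank of $\widehat{\bm J}_s$ already follows from $\mathrm{rank}(\widehat{\bm J}_s\,\partial_{\bm s}\bm h_s)=d_S$, so you need no immersion assumption (an invertible smooth map need not be an immersion anyway). Second, a nonsingular $\bm J\bm\gamma$ gives only local invertibility, so global injectivity of $\bm\gamma$ in the first part needs one extra line, e.g., bijectivity of $\bm h$ onto the product $\widehat{\mathcal C}\times\widehat{\mathcal S}^{(n)}$ combined with connectedness of $\widehat{\mathcal S}^{(n)}$.
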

The proof is provided in Appendix \ref{app:proof_identifiability}.
Note that invertible maps do not lose information of $p_{\bm c}$ and $p_{
\bm s}^{(n)}$. Such identification result allows effective content and style-controlled data generation, domain adaptation, and domain transfer, as shown in \cite{xie2022multi,kong2022partial,shrestha2025content,yan2023counterfactual,timilsina2024identifiable}.

Compared to existing content-style identifiability results \cite{xie2022multi, kong2022partial, shrestha2025content, timilsina2024identifiable,yan2023counterfactual}, our conditions {provide an alternative with arguably less stringent conditions}, without requiring (block or component-wise) statistical independence among latent components as in \cite{xie2022multi, kong2022partial, shrestha2025content, timilsina2024identifiable} or latent variables' non-overlapping influences onto subsets of features \cite{yan2023counterfactual}.

\subsection{Impact of Inexact Orthogonality}

In practice, the orthogonality between content and style Jacobian subspaces in Assumption~\ref{assump:style_subspace} may not hold exactly. In this subsection, we study the impact of violations of Assumption~\ref{assump:style_subspace}. To proceed, we consider the following assumption

\begin{assumption}[Inexact CSDI]\label{assump:relaxed_style_subspace}
The content-induced and style-induced tangent subspaces are approximately orthogonal. In particular, the smallest principal angle between the two subspaces satisfies
\begin{align*}
\angle \!\big(\mathcal R(\bm J_{\bm c} \bm g),~~\mathcal R(\bm J_{\bm s^{(n)}} \bm g) \big)
\ge \min \{ \frac{\pi}{2} \pm \xi \}
\geq \frac{\pi}{2}-\xi,
\end{align*}
for some $ \xi \in [0, \frac{\pi}{2}]$.
\end{assumption}

Using this assumption, we show the following:

\begin{restatable}[Robustness of Identifiability]{theorem}{mainthmrelaxed}\label{thm:relaxed_identifiability}
Assume that Assumption~\ref{assump:relaxed_style_subspace} holds and that the style Jacobian satisfies $\mathrm{rank}(\bm J_{\bm s^{(n)}}\bm g)=d_S$.
Then, the following statements hold:
\begin{itemize}
    \item[(a)] $\widehat{\bm c} = \bm \gamma(\bm c)$, where $\boldsymbol{\gamma}: \mathcal C \to \widehat{\mathcal C}$ is an invertible mapping;
    \item[(b)]  $\widehat{\bm s}^{(n)} = \bm \delta(\bm c,\bm s^{(n)})$, where $\bm \delta : \mathcal C \times \mathcal S^{(n)} \to \widehat{\mathcal S}$ is the induced learned-style map, and for every domain $n \in [N]$, the learned $\widehat{\bm s}^{(n)}$ satisfies
\begin{align}\label{eq:relaxed_thm_result}
    &\big\|\bm J_{\bm c}\widehat{\bm s}^{(n)}\big\|_2
\;\le\;  \noindent \\
&\quad\nicefrac{\sin\xi}{
\sigma_{\min}\!\Big(
\bm J_{\widehat{\bm s}}\widehat{\bm g}\!\big(\widehat{\bm c},\widehat{\bm s}^{(n)}\big)
\Big)
}
\,\|\bm J_{\bm c}\bm g(\bm c,\bm s^{(n)})\|_2. \nonumber
\end{align}
\end{itemize}
\end{restatable}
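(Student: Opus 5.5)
Part (a) does not use orthogonality at all. The first half of Theorem~\ref{thm:identifiability} depends only on the generative model \eqref{eq:gen_model}, the distribution-matching constraint \eqref{eq:dm_data_dec}, and Assumption~\ref{assump:domain_difference}. So (a) is inherited directly from that argument: $\widehat{\bm c}=\bm\gamma(\bm c)$ with $\bm\gamma$ invertible. I would restate this and move on.

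For (b), I would set up the composite map $\bm h=\widehat{\bm g}^{-1}\circ\bm g$ on domain $n$. It sends $(\bm c,\bm s^{(n)})\mapsto(\widehat{\bm c},\widehat{\bm s}^{(n)})$, and I write its second block as $\widehat{\bm s}^{(n)}=\bm\delta(\bm c,\bm s^{(n)})$. The chain rule applied to the identity $\bm g(\bm c,\bm s^{(n)})=\widehat{\bm g}(\widehat{\bm c},\widehat{\bm s}^{(n)})$, differentiated in $\bm c$, gives
\begin{equation*}
\bm J_{\bm c}\bm g=\bm J_{\widehat{\bm c}}\widehat{\bm g}\,\bm J_{\bm c}\widehat{\bm c}+\bm J_{\widehat{\bm s}}\widehat{\bm g}\,\bm J_{\bm c}\widehat{\bm s}^{(n)}.
\end{equation*}

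The learned generator satisfies exact orthogonality \eqref{eq:differential_indep}. So I would project both sides onto $\mathcal R(\bm J_{\widehat{\bm s}}\widehat{\bm g})$ with the orthogonal projector $\bm P_{\widehat s}$. The content term is annihilated, which leaves $\bm P_{\widehat s}\bm J_{\bm c}\bm g=\bm J_{\widehat{\bm s}}\widehat{\bm g}\,\bm J_{\bm c}\widehat{\bm s}^{(n)}$. Because $\bm J_{\widehat{\bm s}}\widehat{\bm g}$ has full column rank $d_S$, we get $\|\bm J_{\bm c}\widehat{\bm s}^{(n)}\|_2\le\|\bm P_{\widehat s}\bm J_{\bm c}\bm g\|_2/\sigma_{\min}(\bm J_{\widehat{\bm s}}\widehat{\bm g})$. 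Full column rank follows from invertibility of $\widehat{\bm g}$, or equivalently from the rank assumption transported through $\bm h$. It remains to show $\|\bm P_{\widehat s}\bm J_{\bm c}\bm g\|_2\le\sin\xi\,\|\bm J_{\bm c}\bm g\|_2$.

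This last step is the main obstacle. It requires identifying the learned style tangent space with the true one, $\mathcal R(\bm J_{\widehat{\bm s}}\widehat{\bm g})=\mathcal R(\bm J_{\bm s^{(n)}}\bm g)$. Once that holds, $\|\bm P_{\bm s}\bm v\|\le\cos(\text{smallest angle})\,\|\bm v\|\le\sin\xi\,\|\bm v\|$ for $\bm v\in\mathcal R(\bm J_{\bm c}\bm g)$, using Assumption~\ref{assump:relaxed_style_subspace}.

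To obtain the identification, I would differentiate the same identity in $\bm s^{(n)}$. By (a), $\widehat{\bm c}=\bm\gamma(\bm c)$ does not depend on $\bm s^{(n)}$, so $\bm J_{\bm s}\widehat{\bm c}=0$. This gives $\bm J_{\bm s^{(n)}}\bm g=\bm J_{\widehat{\bm s}}\widehat{\bm g}\,\bm J_{\bm s}\widehat{\bm s}^{(n)}$, and hence $\mathcal R(\bm J_{\bm s^{(n)}}\bm g)\subseteq\mathcal R(\bm J_{\widehat{\bm s}}\widehat{\bm g})$. Both spaces have dimension $d_S$: the true one by the rank assumption, the learned one by full column rank. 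Therefore the two ranges coincide. (This also shows $\bm J_{\bm s}\widehat{\bm s}^{(n)}$ is invertible, as expected.)

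With the ranges identified, the bound $\|\bm P_{\widehat s}\bm J_{\bm c}\bm g\|_2\le\sin\xi\,\|\bm J_{\bm c}\bm g\|_2$ follows. Combining it with the previous display yields \eqref{eq:relaxed_thm_result}. I would note that setting $\xi=0$ recovers $\bm J_{\bm c}\widehat{\bm s}^{(n)}=0$, consistent with Theorem~\ref{thm:identifiability}.
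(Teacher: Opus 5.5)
Your proposal is correct and follows the paper's proof essentially step for step. The paper's argument consists of the chain rule in $\bm c$, projection onto $\mathcal R(\bm J_{\widehat{\bm s}}\widehat{\bm g})$ to annihilate the learned content term, identification of that range with $\mathcal R(\bm J_{\bm s^{(n)}}\bm g)$, the $\sin\xi$ bound on the projected true content Jacobian, and the $\sigma_{\min}$ lower bound. The only difference is that the paper imports the range identification from \eqref{eq:range_equal}, whereas you re-derive it from the $\bm s^{(n)}$-derivative and the rank assumption. One small caveat: justify the full column rank of $\bm J_{\widehat{\bm s}}\widehat{\bm g}$ with your rank-transport argument, not with bijectivity of $\widehat{\bm g}$, because a smooth bijection need not have a nonsingular Jacobian.
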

The proof is provided in Appendix~\ref{app:proof_relaxed_identifiability}.

The theorem shows that inexact orthogonality between $R(\bm J_{\bm c} \bm g)$ and $\mathcal R(\bm J_{\bm s^{(n)}} \bm g)$ does not affect content learning, but will impact style extraction.
In particular,
when the orthogonality condition is violated with a positive $\xi>0$, the learned style representation may contain traces of content information. Such content dependence is controlled by three quantities: the non-orthogonality characterized by $\sin \xi$, the magnitude of the ground-truth content Jacobian $\|\bm J_{\bm c}\bm g\|_2$, and the conditioning of the learned style Jacobian, measured by the smallest non-zero singular value $\sigma_{\min}(
\bm J_{\widehat{\bm s}}\widehat{\bm g}(\widehat{\bm c},\widehat{\bm s}^{(n)})).$

Note that when $\xi=0$, the content-induced and style-induced tangent subspaces are exactly orthogonal. In this case, the Theorem~\ref{thm:relaxed_identifiability} reduces to Theorem~\ref{thm:identifiability} as $\bm J_{\bm c}\widehat{\bm s}^{(n)}=\bm 0$.

\section{Implementation for High-Dimensional Data}

Much of the prior work in this area adopts VAE-based architectures to implement the learning objective; see, e.g., \cite{kong2022partial, yan2023counterfactual}. However, VAEs are known to be less suitable for high-dimensional, high-resolution image synthesis \cite{huang2019introvae, bredell2023explicitly}. In contrast, \cite{xie2022multi, shrestha2025content} employ GAN-based backbones, which are better suited for high-resolution image generation and domain transfer. Notably, these methods do not impose constraints on the Jacobian, resulting in relatively straightforward implementations.

\textbf{Overall Generative Architecture: \text{CSDI-GAN}.} We use a GAN backbone as in \cite{shrestha2025content,xie2022multi}. This means that we use two Gaussian vectors $\bm r_C$ and $\bm r_S^{(n)}$ as the ``source'' of $\widehat{\bm c}$ and $\widehat{\bm s}^{(n)}$, respectively, i.e.,
\begin{subequations}\label{eq:construction}
    \begin{align}
    \widehat{\bm c} &\triangleq \bm e_C(\bm r_C),\\
    \widehat{\s}^{(n)} &\triangleq \bm e_S^{(n)}(\bm r_S^{(n)}), \forall n \in [N]\\
    \widehat{\bm x}^{(n)} &\triangleq \widehat{\g}(\widehat{\bm c}, \widehat{\s}^{(n)}), \forall n \in [N],
\end{align}
\end{subequations}
where $\e_C, \{\e_{S}^{(n)}\}_{n=1}^{N}$ are trainable invertible content mapping and style mappings, and $\widehat{\g}$ is a generating network;
see similar structures in \cite{shrestha2025content,xie2022multi}. This facilitates easy sampling after model learning. 

Unlike \cite{shrestha2025content,xie2022multi} that use independent $\bm r_C$ and $\bm r_S$ (and thus statistically independent $\widehat{\bm c}$ and $ \widehat{\s}^{(n)}$), we explicitly model dependent content and style by the following: First, we separately sample two blocks of content, i.e.,
\begin{align}
    \bm r_{C_1}\sim\mathcal N(\bm 0,\bm I_{d_{C_1}}), \bm r_{C_2}\sim\mathcal N(\bm 0,\bm I_{d_{C_2}}) \label{eq:content_sampling}
\end{align}
with $d_{C_1}+d_{C_2}=d_C$, and define $\bm r_C \triangleq (\bm r_{C_1},\bm r_{C_2})$. Then, for each domain $n$, we sample
\begin{align}
    \bm r_{S_1}^{(n)}\sim\mathcal N(\bm 0,\bm I_{d_{S_1}}), \label{eq:style_sampling}
\end{align}
and set $\bm r_S^{(n)} \triangleq (\bm r_{C_1},\bm r_{S_1}^{(n)})$. We then feed these Gaussian vectors to \eqref{eq:construction} to form $\widehat{\x}^{(n)}$---which is then used form distribution matching with $\x^{(n)}$ via GANs.

\begin{figure}[t!]
    \centering
    \includegraphics[width=\linewidth]{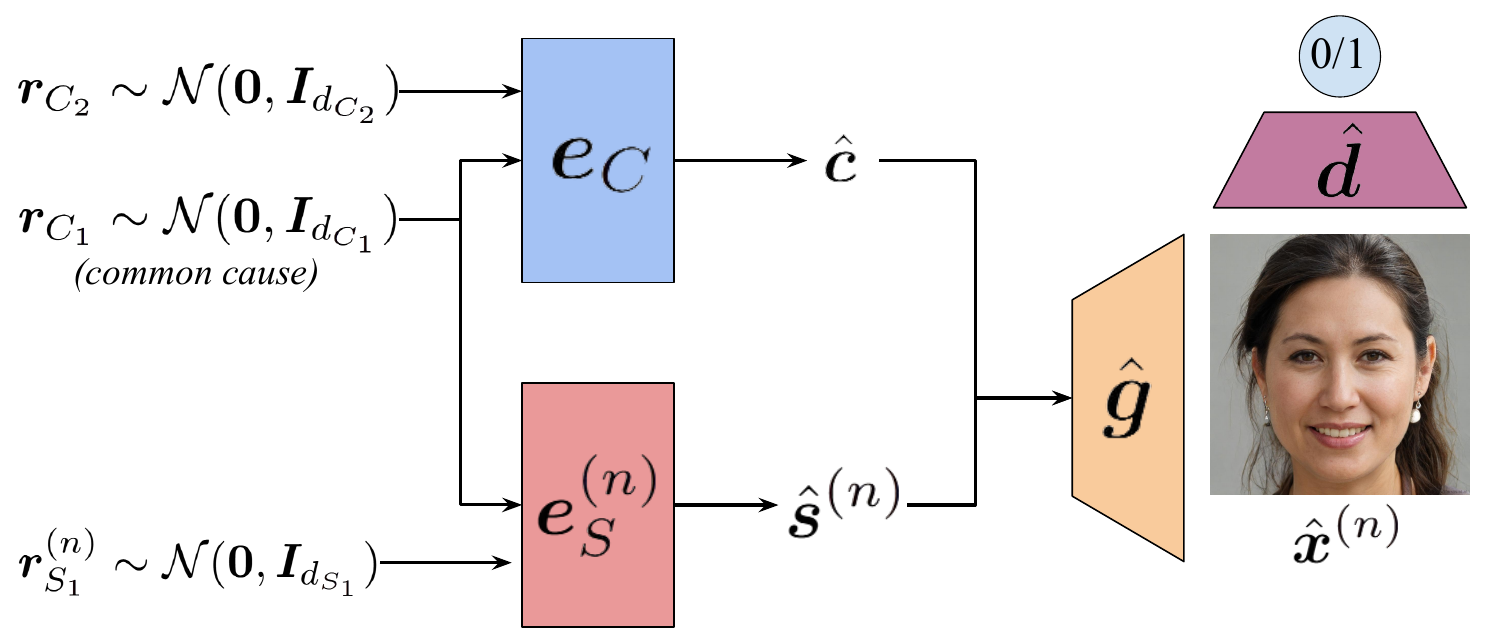}
    \caption{{\text{CSDI-GAN}} architecture for content-style learning via differential independence -- $\e_C, \e_S$ are trainable latent mappings, and $\widehat{\g}, \widehat{\d}$ are generator and discriminator. The face image is randomly generated from \href{https://thispersondoesnotexist.com/}{https://thispersondoesnotexist.com/}.}
    \label{fig:parameterize}
\end{figure}

Fig.~\ref{fig:parameterize} summarizes our GAN-based architecture. Because $\bm r_{C_1}$ is shared between $\bm r_C$ and $\bm r_S^{(n)}$, $\widehat{\bm c}$ and $\widehat{\s}^{(n)}$ can be statistically \textit{dependent }by construction, consistent with Reichenbach's \textit{common cause principle} \cite{reichenbach1956direction, scholkopf2021toward}. Importantly, this dependence is compatible with Assumption~\ref{assump:style_subspace}.

\textbf{Training Loss.}
Based on the generative architecture, we use the following loss to realize our learning criterion in \eqref{eq:identifiability_prob}:
\begin{align}\label{eq:data_dist_match}
&\min_{\widehat{\g}, \bm e_C, \bm t_C,\ \{\widehat{\d}^{(n)}, \bm e_S^{(n)}, \bm t_S^{(n)}\}_{n=1}^N} \mathcal{L}_{\rm GAN}\left(\widehat{\g},\bm e_C,\{\bm e_S^{(n)}\}_{n=1}^N \right) \\
&+{ \lambda_{\rm inv} }\mathcal L_{\rm inv}\left(\bm e_C, \bm t_C,\ \{\bm e_S^{(n)}, \bm t_S^{(n)}\}_{n=1}^N\}_{n=1}^N\right)
+ { \lambda_{\rm orth} }\mathcal L_{\rm orth}. \nonumber
\end{align}
Here the GAN loss is defined as
\begin{align}
    \mathcal L_{\rm GAN}
    &= \textstyle \sum_{n=1}^N \bbE_{\bm x^{(n)}}\Big[\log \widehat{\d}^{(n)}(\bm x^{(n)})\Big] \nonumber\\
    &+ \textstyle \sum_{n=1}^N \bbE_{\bm r_C,\bm r_S^{(n)}}\Big[\log\big(1-\widehat{\d}^{(n)}(\widehat{\bm x}^{(n)})\big)\Big], \label{eq:gan_loss}
\end{align}
in which $\widehat{\bm x}^{(n)}= \widehat{\g}\big(\bm e_C(\bm r_C),\ \bm e_S^{(n)}(\bm r_S^{(n)})\big)$ and $\widehat{\d}^{(n)}$ is a discriminator for domain $n$.
This term is used to realize \eqref{eq:dm_data_dec}.
The invertibility loss is defined as
\begin{align}\label{eq:invertibility_encoders}
    \textstyle \mathcal L_{\rm inv}
    &= \textstyle \bbE\Big[\|\bm t_C(\bm e_C(\bm r_C))-\bm r_C\|_2^2\Big] \nonumber\\
    &+  \textstyle \sum_{n=1}^N \bbE\Big[\|\bm t_S^{(n)}(\bm e_S^{(n)}(\bm r_S^{(n)}))-\bm r_S^{(n)}\|_2^2\Big],
\end{align}
with $\bm t_C$ and $\bm t_S^{(n)}$ being invertibility-promoting decoders (inverses) for the corresponding encoders $\e_C$ and $\e_{S}^{(n)}$. The parameter $\lambda_{\rm inv}>0$ in \eqref{eq:data_dist_match} controls the weight of the invertibility regularization.
This constraint implicitly encourages the learned generator $\widehat{\bm g}$ to be invertible. In particular, combining this criterion \eqref{eq:invertibility_encoders} with the arguments in \cite{shrestha2025content,zimmermann2021contrastive} shows that under the latent dimensionality knowledge, GAN-based distribution matching yields an bijective mapping between the latent manifold and the data manifold; see Appendix~\ref{app:g_hat_invertibility} for details.

The last term ${\cal L}_{\rm orth}$ is used to approximate orthogonality in \eqref{eq:differential_indep}  and $\lambda_{\rm orth} > 0$ balances the regularization's importance. The orthogonality between Jacobian subspaces is quite nontrivial to enforce. The reason is twofold. First, for high-dimensional data, evaluating Jacobian at even a single sample would require a large number of forward- and back-propagation passes, which is not affordable---and extending this to all samples is prohibitive. Second, measuring subspace angles needs proper Jacobian normalization to prevent numerical issues (otherwise close-to-zero Jacobian columns could create small inner products and thus ``false orthogonality''). However, normalization at scale is also a challenging issue. To proceed, we propose using the following regularization:
\begin{align} \label{eq:full_jacobian_loss}
    \mathcal L_{\rm orth}
    \triangleq \sum_{n=1}^N \bbE_{\widehat{\bm c}, \widehat{\s}^{(n)}}\left[
    \frac{\big\| \bm J_{\widehat{\s}^{(n)}}^\top
    \bm J_{\widehat{\bm c}}\big\|_{\rm F}^2}
    {\|\bm J_{\widehat{\bm c}}\|_{\rm F}^2\ \|\bm J_{\widehat{\s}^{(n)}}\|_{\rm F}^2+\epsilon}
    \right],
\end{align}
where $\bm J_{\widehat{\bm c}} \triangleq \bm J_{\widehat{\bm c}} \widehat{\g}(\widehat{\bm c}, \s^{(n)})$ and $\bm J_{\widehat{\s}^{(n)}} \triangleq \bm J_{\widehat{\s}^{(n)}}\widehat{\g}(\widehat{\bm c}, \widehat{\s}^{(n)})$. A small constant $\epsilon > 0$ is introduced in the denominator to ensure numerical stability when $\|\bm J_{\widehat{\bm c}}\|_{\rm F}, \|\bm J_{\widehat{\s}^{(n)}}\|_{\rm F}$ are small. Minimizing $\cL_{\rm orth}$ enforces content-style Jacobian orthogonality by encouraging $\|\bm J_{\widehat{\s}^{(n)}}^\top \bm J_{\widehat{\bm c}}\big\|_{\rm F}$ to be close to $0$, while avoiding the trivial solution $\bm J_{\widehat{\bm c}} \approx \mathbf{0}$, $\bm J_{\widehat{\s}^{(n)}} \approx \mathbf{0}$ or numerical stability issues (see Appendix~\ref{app:lorth_properties} for details).

To efficiently compute $\cL_{\rm orth}$, we use Hutchinson's trace estimator \cite{hutchinson1989stochastic} to obtain unbiased estimates for each term $\bm J_{\widehat{\s}^{(n)}}^\top \bm J_{\widehat{\bm c}}, \|\bm J_{\widehat{\bm c}}\|^{2}_{\rm F}, \|\bm J_{\widehat{\s}^{(n)}}\|_{\rm F}^{2}$ in \eqref{eq:full_jacobian_loss}; details are provided in Appendix \ref{app:hutchinson}. This yields the following approximation for each summand of \eqref{eq:full_jacobian_loss}:
\begin{align}
    \frac{ \|\bm J_{\widehat{\s}^{(n)}}^\top \bm J_{\widehat{\bm c}} \|_{\rm F}^2  }{  \|\bm J_{\widehat{\bm c}} \|_{\rm F}^2  \|\bm J_{\widehat{\s}^{(n)}}  \|_{\rm F}^2 + \epsilon } \approx \frac{  \|\bbE_{\v}\ [ \bm J_{\widehat{\s}^{(n)}}^\top \bm v  (\bm J_{\widehat{\bm c}}^\top \bm v)^\top] \|_{\rm F}^2  }{ \bbE_{\v} \left [ \|\bm J_{\widehat{\bm c}}^\top \bm v \|_2^2 \right ]  \bbE_{\v} \left [ \|\bm J_{\bm { s}^{(n)}}^\top \bm v \|_2^2 \right ] + \epsilon }, \label{eq:hutchinson}
\end{align}
where $\v \in \bbR^{d}$ is a random vector with $\bbE[\v\v^{\top}] = \I_{d}$. This approximation allows us to use vector-Jacobian products (VJP) to efficiently calculate \eqref{eq:hutchinson} without storing the whole Jacobians in memory:
\begin{align}
    &\bm J_{\widehat{\bm c}}^\top \bm v = \nabla_{\widehat{\bm c}}  \langle {\widehat{\g}} ({\widehat{\bm c}},~\widehat{\s}^{(n)} ), \bm v \rangle, \nonumber\\
    &\bm J_{\widehat{\s}^{(n)}}^\top \bm v = \nabla_{\widehat{\s}^{(n)}}  \langle {\widehat{\g}} (\widehat{\bm c},~\widehat{\s}^{(n)} ), \bm v \rangle. \label{eq:vjpprobe}
\end{align}

\begin{remark}
Directly evaluating \eqref{eq:full_jacobian_loss} is computationally impractical for high-dimensional data, because explicitly forming $\bm J_{\widehat{\bm c}}$ and $\bm J_{\widehat{\s}^{(n)}}$ would require $\mathcal{O}\big(B\,d(d_C+d_S)\big)$ memory for batch size $B$ and $\mathcal{O}(B\,d)$ backward passes (one per output coordinate). We therefore adopt the noise-probing VJP estimator \eqref{eq:vjpprobe}, which replaces the $\mathcal{O}(d)$ factor by $\mathcal{O}(K)$ with $K\ll d$ being number of sampled probes $\v$. This avoids storing the full $d\times d_C$ and $d\times d_S$ Jacobians and reducing the Jacobian-related memory by roughly a factor of $(d_C+d_S)$ (see Appendix~\ref{app:complexity_jac} for more details). 
\end{remark}

\begin{remark}
Hutchinson-style noise probing is widely used to efficiently estimate Jacobian-based regularizers without explicitly forming Jacobians \cite{karras2020stylegan2, peebles2020hessian, wei2021orthogonal}. However, implementing \eqref{eq:differential_indep} presents unique challenges that the existing techniques cannot resolve. Path length regularization from \citet{karras2020stylegan2} relies on VJPs, but it only controls the Jacobian of a \textit{single} latent variable and does not capture the \textit{cross}-interaction between content and style required for \eqref{eq:differential_indep}. \cite{peebles2020hessian, wei2021orthogonal} use finite difference, which requires multiple forward passes per sampled directions and becomes expensive when applied across many blocks $\bm J_{\widehat{\bm c}}, \bm J_{\widehat{\s}^{(1)}}, ..., \bm J_{\widehat{\s}^{(N)}}$ as needed in our context. Furthermore, this approximation introduces extensive hyperparameter search, due to using $N+1$ noise probes and tuning $N+1$ step sizes for each Jacobians.
\end{remark}

\vspace{-1em}

\section{Related Works}

\textbf{Identifiable Representation Learning with Jacobian Regularizers.} Regularization on Jacobian has been an important means to establish identifiability in representation learning; see, e.g., \cite{locatello2019disentangle,zheng2023generalizing}.
\citet{nguyen2025dica} proposes a Jacobian volume maximization approach to recover latent variables, which achieves identifiability under reasonable conditions. Another line of works relies on Jacobian sparsity for identifiability, either via structured sparse influence \cite{zheng2023generalizing, moran2022identifiable}, compositional/object-centric assumption \cite{lachapelle2023additive, brady2023ocrl, brady2025interaction}. These methods are later generalized by \cite{matthes2026mechanistic}.
The IMA work \cite{gresele2021independent} uses Jacobian orthogonality to identify latent components, yet understanding to its identifiability remains incomplete; see \cite{buchholz2022function}. These models care about elementwise latent component identification (other than content-style block identification) and often do not consider multi-domain data.

\textbf{Orthogonal Jacobian in Generative Models.} Deep generative models
have explored orthogonal Jacobian structures from various angles. \citet{rolinek2019vae} shows that VAEs with diagonal Gaussian prior promote a decoder with orthogonal Jacobian, empirically leading to better identification of disentangled latent components. 
StyleGAN2 \cite{karras2020stylegan2} training employs the so-called path length regularization, which promotes orthogonality of the generator's Jacobian. The design intention was for latent space smoothness and image quality, but it leads to properties such as latent component semantic disentanglement. \citet{wei2021orthogonal} confirmed this observation by explicitly imposes a orthogonal Jacobian constraints on GANs. Other disentanglement methods, e.g., object-centric learning \cite{burgess2019monet, locatello2020sa} and Hessian Penalty \cite{peebles2020hessian}, also implicitly encourage Jacobian orthogonality.

\textbf{Identifiable Content-Style Learning from Unpaired Data.} Content-style identification from unpaired data is widely used for image translation \cite{shrestha2025content,xie2022multi}, counterfactual generation \cite{yan2023counterfactual}, domain adaptation \cite{kong2022partial,timilsina2024identifiable}, and causal representation learning \cite{sturma2023unpaired}. 
In terms of identifiability, \citet{xie2022multi, kong2022partial} assume that all latent components of $\z^{(n)} = (\bm c, \s^{(n)})$ are statistically independent given domain label $n$, and there should be at least $2d_s + 1$ domains. \citet{shrestha2025content} uses a block independence condition $p(\bm c, \s^{(1)}, ..., \s^{(n)}) = p(\bm c)p(\s^{(1)})...p(\s^{(n)})$ and can work with only two domains. However, statistical independence between content-style might be restrictive in many applications where style naturally depends on content \cite{von2021self, scholkopf2021toward}.
The work \cite{yan2023counterfactual} proposes to learn dependent content and style by exploiting Jacobian sparsity. However, such sparse Jacobian structure may not always hold in applications where latent variables densely influence the data; e.g., in single-cell data \cite{nguyen2025dica}. Our method provides alternative identifiability conditions based on the notion of differential independence, expanding scenarios where content and style can be provably identified.

\vspace{-1em}

\section{Numerical Results}

\textbf{Two-domain MNIST Dataset.}
We construct a controlled two-domain variant of MNIST to test identification of dependent content-style. The digit identity (content) is shared across domains, while the style is domain-specific: \textit{digit color in Domain~1} and \textit{background color in Domain~2}, sampled from label-dependent mixtures to induce tunable dependence; see Fig.~\ref{fig:data_gen_illus} and other details of the data generation process in Appendix~\ref{app:dependent_mnist}.

We implement the content-style generative model \eqref{eq:construction} using DCGAN \cite{radford2016dcgan}, with $\cL_{\rm GAN} + \cL_{\rm inv}$ as training loss. See Appendix~\ref{app:dependent_mnist} for hyperparameter settings. We compare our proposed content-style dependence model (i.e., \eqref{eq:construction} using dependent $\widehat{\bm c}, \widehat{\s}^{(n)}$) against an existing content-style block independence GAN model (B.I. GAN) in \cite{shrestha2025content}. To illustrate the effect of differential independence regularizer, we compare our CSDI-GAN models when trained with/without $\cL_{\rm orth}$.

\textit{\textbf{Counterfactual Generation.}} Fig.~\ref{fig:gen_mnist} reports the results for counterfactual generation task, using the learned generator $\widehat{\g}$. For Fig.~\ref{fig:gen_mnist}, we fix the styles $\widehat{\s}^{(1)}, \widehat{\s}^{(2)}$ and randomly sample multiple (six) contents $\widehat{\bm c} = \e_{C}(\r_C), \r_C \sim \cN(\mathbf{0}, \I_{d_C})$ to create multiple (six) images $\widehat{\x}^{(n)} = \widehat{\g}(\widehat{\bm c}, \widehat{\s}^{(n)})$. Fig.~\ref{fig:gen_mnist} shows that CSDI-GAN can identify the ground-truth ${\bm c}, \s^{(n)}$, which are used to accurately generate images of the \textit{fixed style} with \textit{different contents} in a consistent way. In contrast, the baselines learned an entangled $\widehat{\bm c}, \widehat{\s}^{(n)}$, which makes them fail to consistently generate images with same style. 

We also measure quality of generated images using FID ($\downarrow$) score \cite{heusel2017fid}, and use LPIPS ($\uparrow$) distance \cite{zhang2018lpips} between pairs of images from the same $\widehat{\bm c}$ with different $\widehat{\s}^{(n)}$ to measure style diversity; see Appendix~\ref{app:dependent_mnist} for details. As shown in Fig.~\ref{fig:gen_mnist}, our method surpasses the baselines in terms of both metrics.

\begin{figure}
    \centering
    \includegraphics[width=1\linewidth]{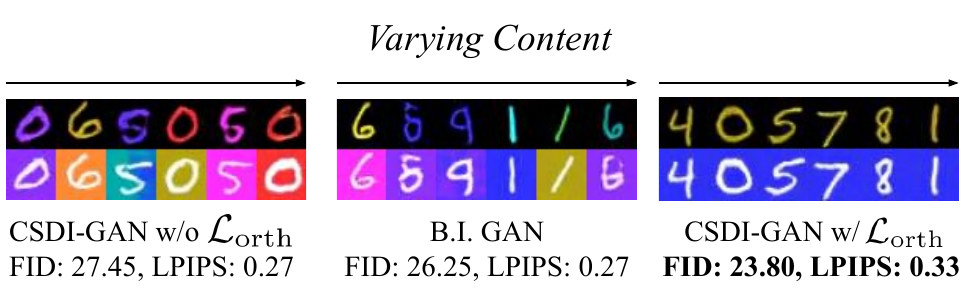}
    \caption{Generation task on two-domain MNIST data with same $\widehat{\s}^{(n)}$ and varying ${\widehat{\bm c}}$: CSDI-GAN w/o $\cL_{\rm orth}$, B.I. GAN, CSDI-GAN w/ $\cL_{\rm orth}$ (Proposed).}
    \label{fig:gen_mnist}
\end{figure}

\textit{\textbf{Domain Translation.}} Given trained generator $\widehat{\g}$ and style mapping $\e_{S}^{(n)}$, translation is done via two steps. First, we solve for $(\widehat{\bm c}, \widehat{\s}^{(i)}) \triangleq \arg\min_{\tilde{\bm c}, \tilde{\s}^{(i)}} \text{div}(\widehat{\g}({\tilde{\bm c}}, \tilde{\s}^{(i)}), \x^{(i)})$, where $\text{div}(\cdot, \cdot)$ is a divergence measure, to obtain the content-style representation of an image $\x^{(i)}$ (akin to GAN inversion \cite{xia2023ganinversion}). Similarly, we extract the style $\widehat{\s}^{(t)}$ from a target domain image. Second, we combine the extracted content $\widehat{\bm c}$ from the source image with the target style $\widehat{\s}^{(t)}$ to create the translated $\widehat{\x}^{(t)} = \widehat{\g}(\widehat{\bm c}, \widehat{\s}^{(t)})$. More details on translation procedure are provided in Appendix \ref{app:dependent_mnist}.

Fig.~\ref{fig:trans_mnist} shows a qualitative result for translation. Each row indicates a content that we want to preserve in translation to a target domain with a new style. The baselines fail to identify ground-truth ${\bm c}, \s^{(n)}$ and thus output entangled, low-quality translated images. Our method clearly learns the content and style, outputting intended translations.
{For further experiments, see Appendix.~\ref{app:dependent_mnist}}

\begin{figure}
    \centering
    \includegraphics[width=1.0\linewidth]{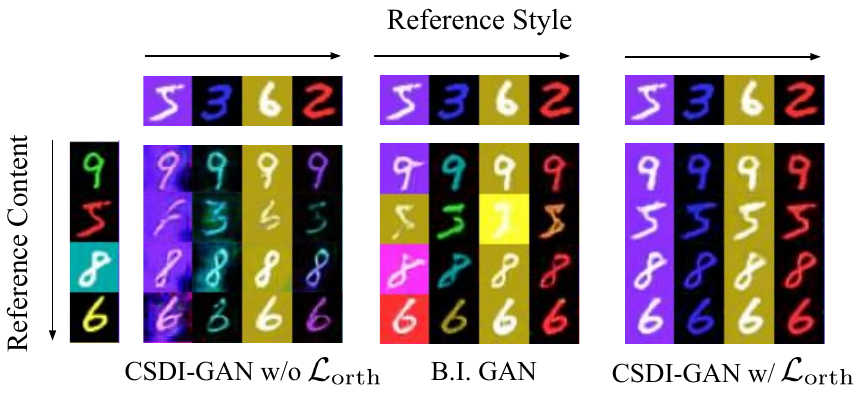}
    \caption{Translation task on two-domain MNIST data: CSDI-GAN w/o $\cL_{\rm orth}$, B.I. GAN, CSDI-GAN w/ $\cL_{\rm orth}$ (Proposed).}
    \label{fig:trans_mnist}
\end{figure}

\begin{table*}[t]
\centering
\caption{Comparison of FID ($\downarrow$) and LPIPS ($\uparrow$) for generation (left) and translation (right) tasks on AFHQ and CelebA-HQ.}
\label{tab:combined_tasks}
\normalsize 
\begin{subtable}[t]{0.49\textwidth}
    \centering
    \caption{Generation task.}
    \label{tab:gen}
    \resizebox{\columnwidth}{!}{%
    \renewcommand{\arraystretch}{1.15}
    \begin{tabular}{l|cc|cc}
    \hline
    Method & \multicolumn{2}{c|}{AFHQ} & \multicolumn{2}{c}{CelebA-HQ} \\
     & FID ($\downarrow$) & LPIPS ($\uparrow$) & FID ($\downarrow$) & LPIPS ($\uparrow$) \\  
    \hline
    Transitional-cGAN & 6.7 & -- & 6.4 & -- \\
    StyleGAN2-ADA & 6.5 & -- & 5.0 & -- \\
    \hline
    I-StyleGAN (Tr) \cite{xie2022multi} & 5.6 & 0.3436 & 4.8 & 0.2799 \\
    B.I. GAN \cite{shrestha2025content} & 5.2 & 0.3995 & 4.6 & 0.2628 \\
    CSDI-GAN (w/o $\mathcal{L}_{\rm orth}$) & 5.3 & 0.4079 & 6.0 & 0.2467 \\
    \textbf{CSDI-GAN (Proposed)} & \textbf{4.4} & \textbf{0.4452} & \textbf{4.3} & \textbf{0.3392} \\
    \hline
    \end{tabular}}
\end{subtable}
\hfill
\begin{subtable}[t]{0.49\textwidth}
    \centering
    \caption{Translation task.}
    \label{tab:trans}
    \resizebox{\columnwidth}{!}{%
    \renewcommand{\arraystretch}{1.15}
    \begin{tabular}{l|cc|cc}
    \hline
    Method & \multicolumn{2}{c|}{AFHQ} & \multicolumn{2}{c}{CelebA-HQ} \\
     & FID ($\downarrow$) & LPIPS ($\uparrow$) & FID ($\downarrow$) & LPIPS ($\uparrow$) \\
     \hline
    StarGANv2 \cite{choi2020starganv2} & 15.0 & 0.3578 & 14.3 & \textbf{0.3148} \\
    I-StyleGAN \cite{xie2022multi} & 17.6 & 0.3701 & 19.7 & 0.2003 \\
    B.I. GAN \cite{shrestha2025content} & 10.5 & 0.4107 & 24.6 & 0.2828 \\
    \textbf{CSDI-GAN (Proposed)} & \textbf{7.1} & \textbf{0.4392} & \textbf{12.9} & {0.3105} \\
    \hline
    \end{tabular}}
\end{subtable}
\end{table*}

\textbf{AFHQ and CelebA-HQ Datasets.}
We further verify our identifiability result on real-world datasets: animal faces (AFHQ) with $3$ domains (dog/cat/wild animals) \cite{choi2020starganv2}, and human faces (CelebA-HQ) with $2$ domains (male/female) \cite{karras2018progressive}. In addition to conventional GAN models StyleGAN2-ADA \cite{karras2020ada} and Transitional-cGAN \cite{shahbazi2022collapse}, we compare our {\text{CSDI-GAN}} with baselines specifically designed for content-style learning: I-StyleGAN \cite{xie2022multi}, Block Independence GAN (B.I. GAN) \cite{shrestha2025content}.

We use StyleGAN2-ADA as the base architecture and train from scratch using the corresponding loss functions from CSDI-GAN, I-StyleGAN \cite{xie2022multi}, and B.I. GAN \cite{shrestha2025content}. To illustrate the effect of enforcing Assumption~\ref{assump:style_subspace}, we also test our CSDI-GAN without using the regularizer $\cL_{\rm orth}$. Further experiment details, hyperparameter settings, and additional qualitative results can be found in Appendix \ref{app:afhq_celebahq_details}. 

\textit{\textbf{Counterfactual Generation.}} Table~\ref{tab:gen} shows the image quality (measured by FID) and the style diversity\footnote{As StyleGAN2-ADA and Transitional-cGAN do not learn a content-style model, their style diversity (LPIPS) are not evaluated.} (measured by LPIPS, similar to MNIST experiments) of all considered methods. We can see that the proposed CSDI-GAN outperforms the baselines in both metrics, including the GAN architectures designed for content-style modeling. 

Fig.~\ref{fig:data_gen_illus_afhq_main} presents several failure cases of the baselines on AFHQ. In this dataset, the content is animal's pose, whereas the style is a breed of dog/cat/wild animal. We observe that B.I. GAN \cite{shrestha2025content} and I-StyleGAN \cite{xie2022multi} fail to keep the style (animal breed) consistent when changing the content (pose); for example, B.I. GAN generates a lion, a wolf, and a leopard, which are three different breeds of animals. In contrast, our CSDI-GAN consistently keep the style information disentangled from content variations (i.e., generating the same cat and the same fox). We also see the importance of enforcing Assumption~\ref{assump:style_subspace} in content-style learning: CSDI-GAN without $\cL_{\rm orth}$ also fails to preserve the style information and generates two different wild animals (a tiger and a leopard).

\begin{figure}
    \centering
    \includegraphics[width=1.0\linewidth]{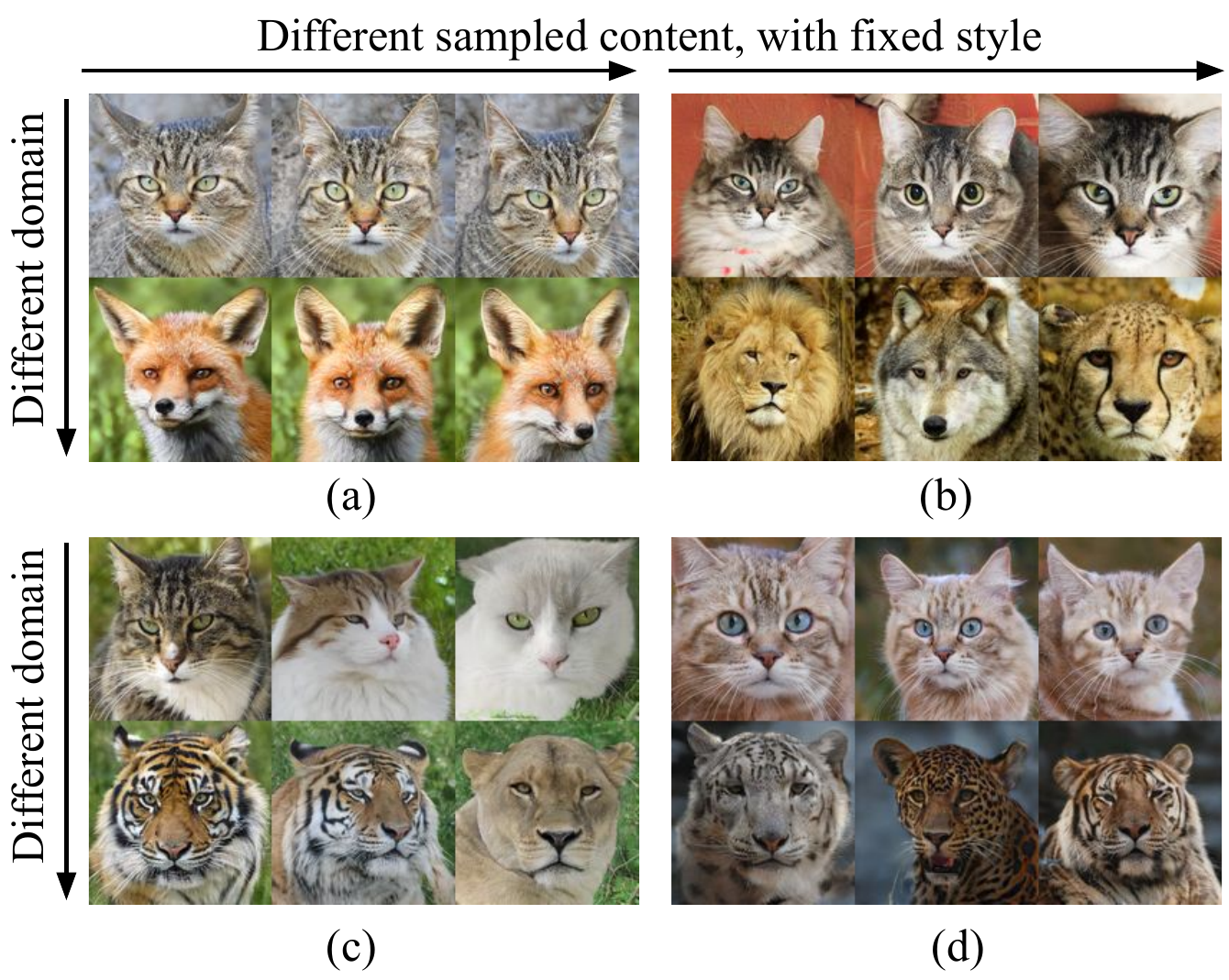}
    \caption{Image generation task on AFHQ (a) CSDI-GAN (Ours) (b) B.I. GAN \cite{shrestha2025content} (c) I-StyleGAN \cite{xie2022multi} (d) CSDI-GAN w/o $\cL_{\rm orth}$.}
    \label{fig:data_gen_illus_afhq_main}
\end{figure}

\textit{\textbf{Domain Translation.}} We follow the same domain translation procedure described in our MNIST experiment. Table~\ref{tab:trans} shows the image quality (FID) and the style diversity (LPIPS, measured in a similar way to MNIST experiments) of the baselines. We also add StarGANv2, which is a \textit{dedicated system} for multi-domain translation without learning content-style representation \cite{choi2020starganv2}. Our proposal CSDI-GAN mostly outperforms the baselines in both metrics, except for the style diversity in CelebA-HQ dataset where CSDI-GAN is on par with StarGANv2.

Fig.~\ref{fig:all_translation} shows several resulting translations of the considered methods. Each row presents the result of translating between a source domain (shown by \textit{Content} column) and a target domain (shown by \textit{Style} column). While CSDI-GAN successfully preserves the content and style during translation, the baselines fail in some cases. For example, B.I. GAN produced a wolf instead of a tiger in \texttt{cat2wild}, StarGANv2 did not keep the dog breed content in \texttt{wild2dog}, and I-StyleGAN failed in \texttt{male2female}.

Fig.~\ref{fig:data_gen_illus_afhq} shows additional qualitative results for \texttt{cat2dog} translation on AFHQ. Each row corresponds to the content to be preserved (i.e., pose), and each column corresponds to a target style (i.e., dog breed). CSDI-GAN with $\cL_{\rm orth}$ successfully preserves the specified pose while translating the image into the \texttt{dog} domain with the desired style. In contrast, B.I. GAN \cite{shrestha2025content} and I-StyleGAN \cite{xie2022multi} exhibit different failure modes. B.I. GAN sometimes fails to preserve the dog breed specified by the style input. I-StyleGAN exhibits similar style failures and also shows a spurious correlation: it tends to preserve the cat-ear shape during translation, producing dog images with similar ear structures. Although B.I. GAN generally preserves pose, I-StyleGAN fails to do so in some cases. Finally, removing $\cL_{\rm orth}$ from CSDI-GAN preserves content reasonably well but often fails to retain the target dog identity. These results confirm that the content-style differential independence regularizer $\cL_{\rm orth}$ plays a critical role in extracting style information.

 For reproducibility, the implementation of CSDI-GAN is publicly available at: {\small\url{https://github.com/subashtimilsina/CSDI}.}

\begin{figure}[t!]
    \centering
    \includegraphics[width=.92\linewidth]{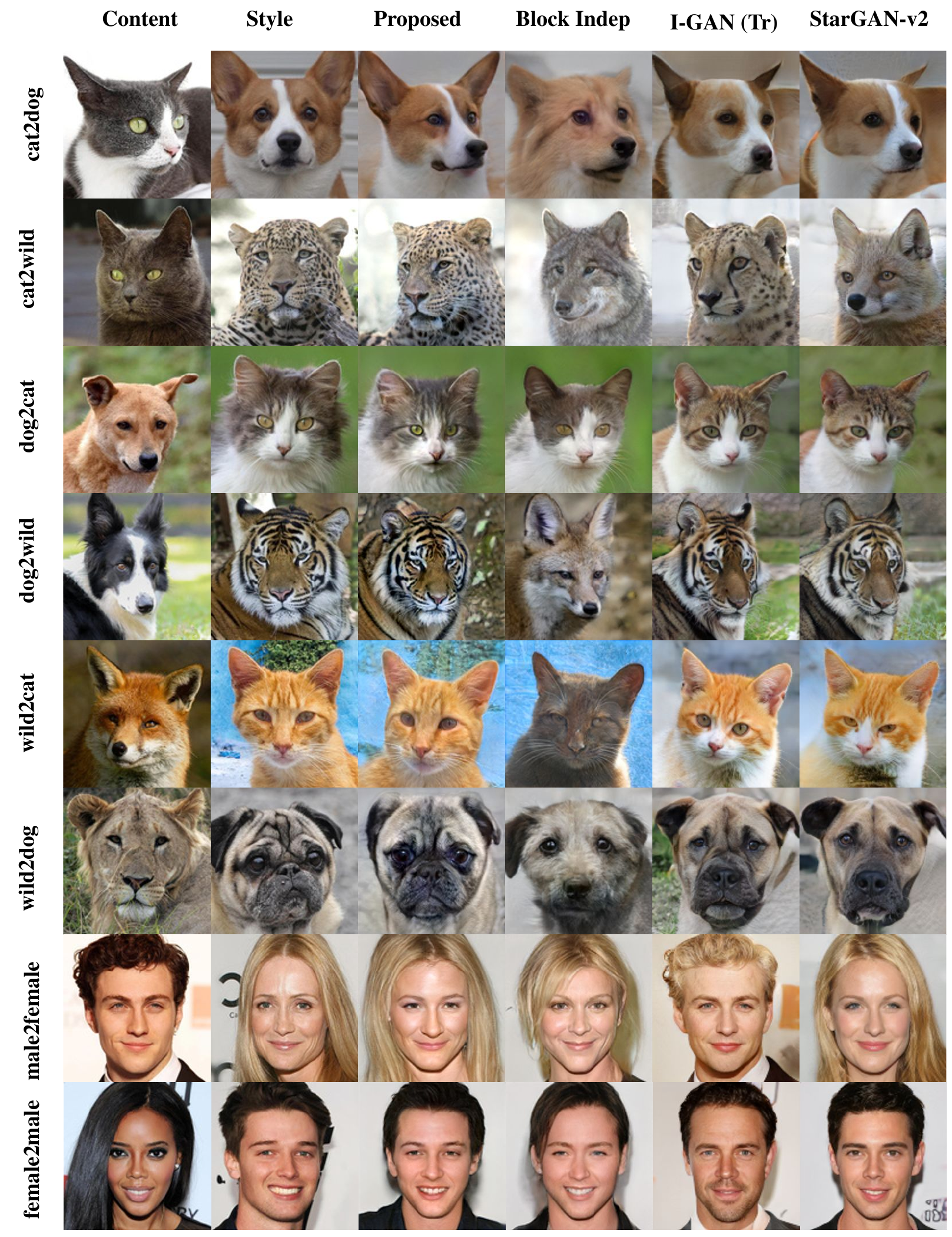}
    \caption{Comparing of translation results of CSDI-GAN (Ours), B.I. GAN \cite{shrestha2025content}, I-StyleGAN \cite{xie2022multi}, and StarGANv2 \cite{choi2020starganv2}.}
    \label{fig:all_translation}
    \vspace{-1em}
\end{figure}

\section{Conclusion}
We revisit the problem of content-style identifiability from unpaired multi-domain data, which is the foundation behind various applications such as counterfactual generation and domain translation. Departing from previous modeling principles, we show that provable identification of both content and style can be achieved via a learning objective based on differential independence between ${\bm c}$ and $\s^{(n)}$, without relying on potentially stringent content-style statistical independence \cite{kong2022partial, xie2022multi, shrestha2025content} or sparse latent influence \cite{yan2023counterfactual}. Experiment results corroborate our identifiability theory.

\textbf{Limitations and Future Works.}
In this work, we focus primarily on GAN-based implementations due to their explicit content-style generator structure and computational efficiency. However, GAN training can be unstable and may suffer from mode collapse. Modern diffusion and flow-matching models offer improved generation quality and diversity, making them attractive directions for extending the proposed framework. Incorporating the proposed content-style differential independence constraints into these models is nontrivial due to their fundamentally different architectures and training mechanisms. Extending the proposed CSDI framework to diffusion and flow-based generative models is an important direction for future work.

\begin{figure}[t!]
    \centering
    \includegraphics[width=\linewidth]{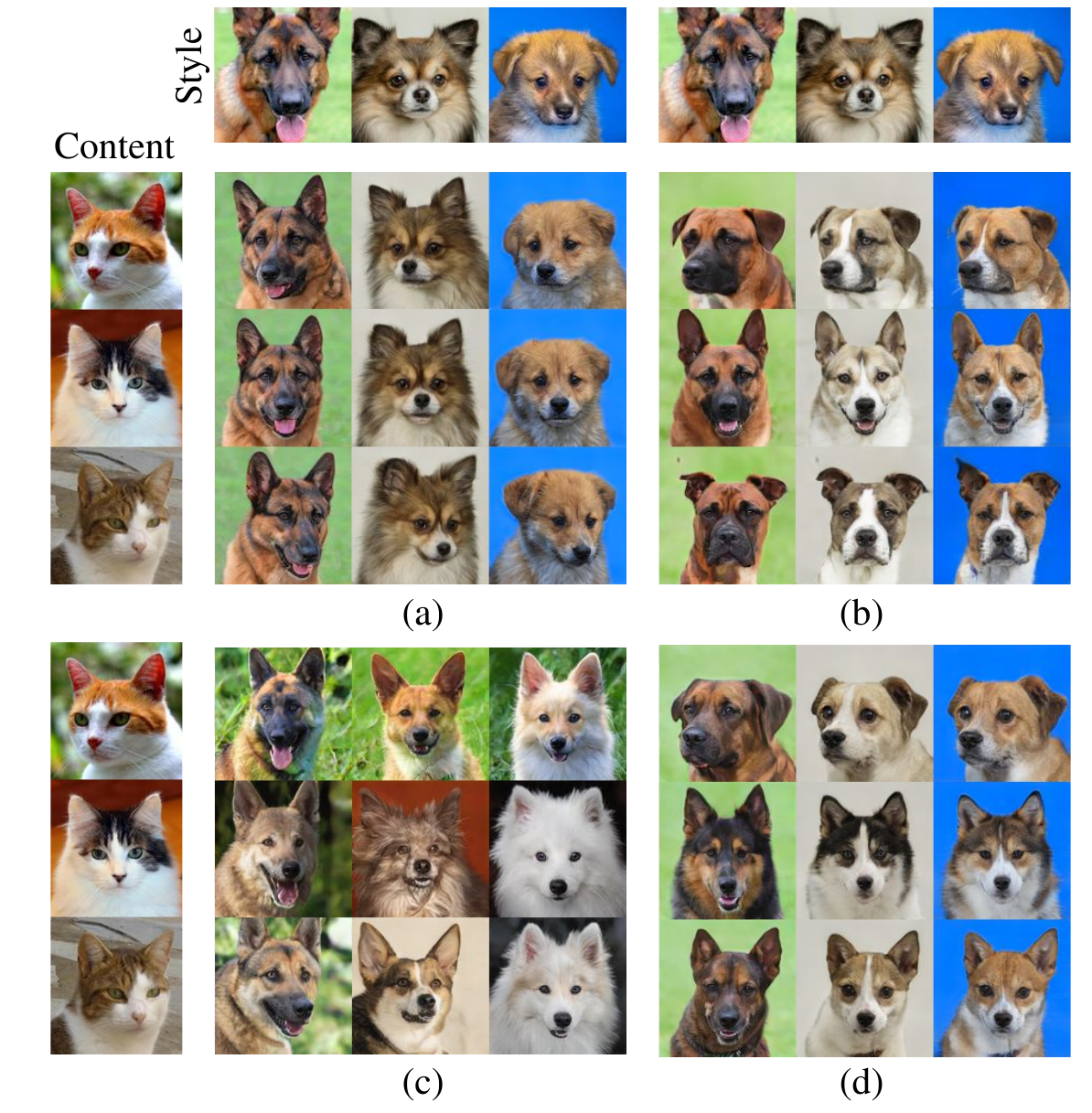}
    \caption{\texttt{cat2dog} image translation results in AFHQ dataset. (a) Our CSDI-GAN w/ $\cL_{\rm orth}$ (b) B.I. GAN \cite{shrestha2025content} (c) I-StyleGAN \cite{xie2022multi} (d) CSDI-GAN w/o $\cL_{\rm orth}$.}
    \label{fig:data_gen_illus_afhq}
\end{figure}

\clearpage

\section*{Acknowledgment}

This work was supported in part by the National Science Foundation (NSF) under Project ECCS-2450987, and in part by the NSF CAREER Award ECCS-2144889.

\section*{Impact Statement}
This paper presents work whose goal is to advance identifiability theory and scalable implementation of unsupervised generative model learning. There are many potential societal consequences of our work, none of which we feel must be specifically highlighted here.

\bibliography{refs}
\bibliographystyle{icml2026}

\clearpage

\appendix
\onecolumn

\begin{center}
\Large \textbf{Supplementary Materials of ``Content-Style Identification\\via Differential Independence''}
\end{center}

\section{Invertibility of $\widehat{\bm g}$}
\label{app:g_hat_invertibility}

Let $\widehat{\bm g}$ denote the learned generator mapping from the latent product manifold
$\widehat{\mathcal Z}= \widehat{\mathcal C}\times \widehat{\mathcal S}$ to the data manifold $\mathcal X$.
In this section we explains why, under correct latent dimensionality and regularity of the learned generator $\widehat{\bm g}$.
 $\widehat{\bm g}$ is bijective even though we do not explicitly impose an invertibility constraint on $\widehat{\bm g}$. To that end, consider the following proposition.

\begin{proposition}[(Sec.~A.5. ``Effects of the Uniformity Loss'', Theorem~5 \cite{zimmermann2021contrastive}).]\label{prop:zimmermann}
Let $\mathcal{M}$ and $\mathcal{N}$ be simply connected and oriented $C^{1}$ manifolds without boundaries and
$h:\mathcal{M}\to\mathcal{N}$ be a differentiable map. Further, let the random variable $z\in\mathcal{M}$ be
distributed according to $z \sim p(z)$ for a regular function $p$, i.e., $0<p<\infty$. If the push forward
$h_{\#}p(z)$ of $p$ through $h$ is also a regular density, i.e., $0<h_{\#}p<\infty$, then $h$ is a bijection.
\end{proposition}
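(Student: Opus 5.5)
The plan is to show, in order, that $h$ is a local diffeomorphism, that it is surjective, that it is a covering map, and finally that simple connectivity forces the covering to have a single sheet. Throughout I take $\dim\mathcal M=\dim\mathcal N$, which is implicit in the statement: this is the setting of Appendix~\ref{app:g_hat_invertibility}, where latent and data manifold dimensions match, and it is also what makes the notion of a pushforward density meaningful.

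\textbf{Step 1 (local diffeomorphism).} First I use the change-of-variables formula for the pushforward. At any regular value $y$,
\[
h_{\#}p(y)=\sum_{x\in h^{-1}(y)} \frac{p(x)}{|\det \bm J h(x)|}.
\]
Suppose some $x_0$ had $\det\bm J h(x_0)=0$. Because $p(x_0)>0$ and $p$ is continuous and positive near $x_0$, a neighborhood $U$ of $x_0$ carries positive mass. Near $x_0$ the Jacobian determinant tends to zero, so the mass of $U$ is squeezed into $h(U)$, whose volume is small compared with that mass; by Sard's theorem $h(\{\det\bm J h=0\})$ is even a null set. This forces $h_{\#}p$ to be unbounded near $h(x_0)$, or to carry a singular part, contradicting $h_{\#}p<\infty$. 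I will make this rigorous by comparing $\int_U p$ with $\int_{h(U)} h_{\#}p\le \sup h_{\#}p\cdot \mathrm{vol}(h(U))$. Hence $\bm J h$ is everywhere invertible, and by the inverse function theorem $h$ is a local diffeomorphism.

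\textbf{Step 2 (surjectivity).} Since $h_{\#}p>0$ everywhere on $\mathcal N$, every $y\in\mathcal N$ lies in the support of the pushforward measure. Because $h$ is an open map by Step 1, $h(\mathcal M)$ is open. It is also dense, since the positive density would give the complement, which is open, positive measure otherwise. It remains to show $h(\mathcal M)$ is closed; this comes out of Step 3.

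\textbf{Step 3 (covering map; the main obstacle).} A surjective local diffeomorphism is a covering map as soon as it has the path-lifting property, or equivalently for equidimensional manifolds as soon as it is proper. Properness is exactly where the density hypotheses must be used, and I expect this to be the main difficulty.

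My first attempt goes as follows. Take a path $\gamma$ in $\mathcal N$ whose lift $\tilde\gamma$ escapes every compact subset of $\mathcal M$ before reaching the end of $\gamma$. Near the endpoint $y^*=\gamma(1)$ I then estimate the fibre sum in the formula above: along the escaping branch, either the preimage count near $y^*$ changes, or the contributions $p/|\det\bm J h|$ fail to be continuous at $y^*$. I would then argue that this violates the regularity of $h_{\#}p$, read as a continuous, finite, positive density.

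If this approach proves too delicate, the fallback is to state explicitly the regularity used in the cited Theorem~5 of \cite{zimmermann2021contrastive}, namely continuity of $h_{\#}p$ together with finiteness of the fibres, and to derive properness from it. In either route, the outcome of this step is that $h$ is a covering map, and in particular that $h(\mathcal M)$ is closed, which completes Step 2.

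\textbf{Step 4 (conclusion).} $\mathcal M$ is connected and $\mathcal N$ is simply connected, and $h:\mathcal M\to\mathcal N$ is a connected covering. A connected covering of a simply connected space is a homeomorphism, so $h$ is one-sheeted. Combined with surjectivity from Step 2, $h$ is a bijection, and in fact a diffeomorphism by Step 1.
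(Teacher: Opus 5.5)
\paragraph{The gap is in Step~3, and it cannot be closed.} Nothing in the hypotheses forces $h$ to be proper or to have the path-lifting property. In fact the proposition as stated is false. The paper offers no proof of its own; it only imports Theorem~5 of \cite{zimmermann2021contrastive}, so there is no argument there to borrow.

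What your sketch overlooks is that $p$ is integrable on a noncompact $\mathcal M$. Along a branch escaping to infinity, the fibre contribution $p/|\det \bm J h|$ can therefore simply tend to $0$. The number of preimages then jumps, while $h_{\#}p$ stays finite, continuous and positive, with positivity supplied by the other sheets.

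Concretely, take $\mathcal M=\mathcal N=\mathbb C\cong\mathbb R^2$, $h(z)=\int_0^z e^{w^2}\,dw$ and $p(z)\propto |h'(z)|^2 e^{-|z|^3}$. This $p$ is integrable because $|h'(z)|^2\le e^{2|z|^2}$.
\begin{itemize}
\item Since $h'(z)=e^{z^2}\neq 0$, $h$ is a smooth local diffeomorphism with $\det\bm J h=|h'|^2$.
\item It omits no value. If $h-a=e^{g}$, finite order makes $g$ a polynomial. Then the polynomial $g'=e^{z^2-g}$ has no zeros, hence is constant. That forces $g=z^2+\mathrm{const}$ and so $g'=2z$, a contradiction.
\item Being transcendental, $h$ is not injective.
\item Its pushforward density is $h_{\#}p(y)\propto\sum_{h(z)=y}e^{-|z|^3}$. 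This is positive by surjectivity. It is bounded and continuous because, by Jensen's formula, $h-y$ has $O(1+R^2)$ zeros in $\{|z|<R\}$ uniformly in $y$.
\end{itemize}
Every hypothesis holds, with $p$ and $h_{\#}p$ even continuous, yet $h$ is not a bijection.

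Your fallback (continuity of $h_{\#}p$ plus finite fibres) fails as well. Immerse a disk with a long thick tentacle into the plane so that the tentacle winds around and lies over the disk and over its own root, with image an open disk. Let $p>0$ fade to $0$ at the edges of $\mathcal M$. Fibres then have at most three points, and $h_{\#}p$ is continuous, finite and positive.

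\paragraph{What is missing, and two smaller points.} The statement needs an extra hypothesis, most naturally that $h$ is proper. With it, your outline is the right one:
\begin{itemize}
\item Step~1 makes $h$ a local diffeomorphism.
\item A proper local diffeomorphism between connected manifolds of equal dimension has open and closed image and is a finite covering of $\mathcal N$. This also supplies the closedness you deferred in Step~2.
\item Step~4 then finishes.
\end{itemize}

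Step~1 needs $h_{\#}p$ locally bounded and $p$ locally bounded below. The pointwise condition $0<p,\ h_{\#}p<\infty$ does not provide this, since densities are only defined almost everywhere. For example, for $h(x)=x^3$ on $\mathbb R$ the pushforward density $\propto|y|^{-2/3}$ can be redefined at $y=0$, even though $h'(0)=0$.

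In Step~2, the complement of the open set $h(\mathcal M)$ is closed, not open. Density of $h(\mathcal M)$ follows instead because a nonempty interior of that complement would receive positive mass from $h_{\#}p>0$ while having no preimage.
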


Proposition~\ref{prop:zimmermann} also applies when $\mathcal M$ and $\mathcal N$ are embedded in Euclidean spaces of different dimensions, as long as they have the same intrinsic manifold dimension (see Corollary~1 in \cite{zimmermann2021contrastive}).
In our generative model, the data dimension $d$ may exceed $d_{\rm C}+d_{\rm S}$, this assumption is compatible with
$\mathcal X\subseteq \mathbb R^d$ because the observations lie on a $(d_{\rm C}+d_{\rm S})$-dimensional manifold embedded in $\mathbb R^d$.

Assume the latent dimensionalities $d_{\rm C}$ and $d_{\rm S}$ match the intrinsic dimensions of the true factors.
For each domain $n\in[N]$, we apply Proposition~\ref{prop:zimmermann} with
\[
\mathcal M=\widehat{\mathcal Z}^{(n)}= \widehat{\mathcal C}\times \widehat{\mathcal S}^{(n)},
\qquad
\mathcal N=\mathcal X^{(n)},
\qquad
h=\widehat{\bm g}.
\]
We assume $\widehat{\mathcal Z}^{(n)}$ and $\mathcal X^{(n)}$ are oriented $C^1$ manifolds without boundary.

We verify the assumptions of Proposition~\ref{prop:zimmermann} in our setting, step by step, as follows:
\paragraph{(i) Simply connectedness of $\widehat{\mathcal Z}^{(n)}$.}
Let $\bm r_C\sim\mathcal N(\bm 0,\bm I_{d_{\rm C}})$ and $\bm r_S^{(n)}\sim\mathcal N(\bm 0,\bm I_{d_{\rm S}})$ be random variables
on $\mathbb R^{d_{\rm C}}$ and $\mathbb R^{d_{\rm S}}$, respectively. We learn $C^1$ invertible maps
\[
\bm e_C:\mathbb R^{d_{\rm C}}\to\widehat{\mathcal C},
\qquad
\bm e_S^{(n)}:\mathbb R^{d_{\rm S}}\to\widehat{\mathcal S}^{(n)},
\]
and enforce their invertibility through \eqref{eq:invertibility_encoders}.
Under these assumptions, $\widehat{\mathcal C}$ is diffeomorphic to $\mathbb R^{d_{\rm C}}$ and
$\widehat{\mathcal S}^{(n)}$ is diffeomorphic to $\mathbb R^{d_{\rm S}}$. Hence both are simply connected, and therefore their product
\[
\widehat{\mathcal Z}^{(n)}=\widehat{\mathcal C}\times \widehat{\mathcal S}^{(n)}
\]
is simply connected as well.

\paragraph{(ii) Regularity of the latent density on $\widehat{\mathcal Z}^{(n)}$.}
The Gaussian densities on $\mathbb R^{d_{\rm C}}$ and $\mathbb R^{d_{\rm S}}$ are strictly positive and finite everywhere.
Since $\bm e_C$ and $\bm e_S^{(n)}$ are invertible and $C^1$, their pushforwards induce densities on
$\widehat{\mathcal C}$ and $\widehat{\mathcal S}^{(n)}$ that are also regular.
Consequently, the joint latent density $p_{\widehat{\bm z}^{(n)}}$ on $\widehat{\mathcal Z}^{(n)}$ is regular:
\[
0<p_{\widehat{\bm z}^{(n)}}(\widehat{\bm z}^{(n)})<\infty,
\qquad \forall\,\widehat{\bm z}^{(n)}\in\widehat{\mathcal Z}^{(n)}.
\]

\paragraph{(iii) Regularity of the induced data density on $\mathcal X^{(n)}$.}
Let $p_{\bm x^{(n)}}$ be the pushforward of $p_{\widehat{\bm z}^{(n)}}$ through $\widehat{\bm g}$:
\[
p_{\bm x^{(n)}} \;=\; \widehat{\bm g}_{\#}p_{\widehat{\bm z}^{(n)}}.
\]
 Since $p_{\bm x^{(n)}}$ is also a regular density $
0<p_{\bm x^{(n)}}(\bm x)<\infty,~ \forall\,\bm x\in\mathcal X^{(n)}.$

Steps (i)--(iii) verify all conditions of Proposition~\ref{prop:zimmermann}. Therefore, for each domain $n\in[N]$,
the learned generator $\widehat{\bm g}:\widehat{\mathcal Z}^{(n)}\to\mathcal X^{(n)}$ is a bijection. In particular, the inverse
$\widehat{\bm f} = \widehat{\bm g}^{-1}$ exists, and for each $\bm x^{(n)}\in\mathcal X^{(n)}$,
\begin{equation}\label{eq:inverse_exists_appendix}
\widehat{\bm z}^{(n)} \;=\; \widehat{\bm g}^{-1}(\bm x^{(n)}) \;=\; \widehat{\bm f}(\bm x^{(n)}),
\qquad \forall\,n\in[N].
\end{equation}

\section{Proof of Theorem \ref{thm:identifiability}}
\label{app:proof_identifiability}

\mainthmnew*

\begin{proof}

Let $\widehat{\bm g}$ denote the learned bijective mapping from the latent product space
$\widehat{\mathcal Z} \;=\; \widehat{\mathcal C}\times \widehat{\mathcal S}$
to the data manifold $\mathcal X$ by solving the problem in \eqref{eq:identifiability_prob}.

Let us denote the inverse mapping $\widehat{\bm f}=\widehat{\bm g}^{-1}$ and, for each sample $\bm x^{(n)}\in\mathcal X$,
\begin{align}
\widehat{\bm z}^{(n)} &= \widehat{\bm g}^{-1}(\bm x^{(n)}) \;=\; \widehat{\bm f}(\bm x^{(n)}) \quad \forall n\in[N].
\end{align}
With the product structure $\widehat{\bm z}^{(n)}=(\widehat{\bm c}^{(n)},\widehat{\bm s}^{(n)})$, we denote the components by
\[
\widehat{\bm c}^{(n)}=\widehat{\bm f}_C(\bm x^{(n)}),\qquad
\widehat{\bm s}^{(n)}=\widehat{\bm f}_S(\bm x^{(n)})\quad \forall n\in[N].
\]

We will now prove this in following two steps:

\noindent\textbf{Step 1 (Content identifiability).}
For content identifiability the proof follows the similar steps as in (Part~C.1 Theorem 3.3 \cite{shrestha2025content}).

\begin{proposition}[(Summary of Part 1, Theorem~3.3 \cite{shrestha2025content}).]
Under the nonlinear mixture model \eqref{eq:gen_model}, suppose that
Assumptions~\ref{assump:domain_difference} hold and distribution matching constraint is satisfied in \eqref{eq:dm_data_dec}. The learned content is identifiable up to an non-linear invertibile mapping of ground truth content $\bm c$:
\begin{equation}\label{eq:content_identifiability_gamma}
\widehat{\bm c} =  \widehat{\bm f}_{\rm C}(\bm x^{(n)}) \;=\; \bm\gamma(\bm c).
\end{equation}
 for every domain $n\in[N]$.
\end{proposition}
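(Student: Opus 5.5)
The plan is to follow the distribution-matching argument of \cite{shrestha2025content}, but to replace every use of content--style independence with Assumption~\ref{assump:domain_difference} alone. Compose the learned inverse with the true generator to get the indeterminacy map
\[
\bm h \triangleq \widehat{\bm f}\circ \bm g:\ (\bm c,\bm s^{(n)})\mapsto (\widehat{\bm c},\widehat{\bm s}^{(n)}), \qquad \bm h=(\bm h_C,\bm h_S).
\]
By Appendix~\ref{app:g_hat_invertibility} and the bijectivity of $\bm g$, $\bm h$ is a $C^1$ bijection on each domain. The learned content is then $\widehat{\bm c}=\bm h_C(\bm c,\bm s^{(n)})$. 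The goal is to show that $\bm h_C$ does not depend on its second argument, and that the resulting map $\bm\gamma$ is invertible.

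\textbf{Step 1: invariant content marginal.} In the learned model \eqref{eq:construction}, $\widehat{\bm c}=\bm e_C(\bm r_C)$ has a single law shared by all domains. The constraint \eqref{eq:dm_data_dec} says $\bm x^{(n)}\deq \widehat{\bm g}(\widehat{\bm c},\widehat{\bm s}^{(n)})$. Hence $\widehat{\bm f}_C(\bm x^{(n)})$ has this same law for every $n$. Now fix any measurable $\widehat{\cB}\subseteq\widehat{\cC}$ and set $\cA\triangleq \bm h_C^{-1}(\widehat{\cB})\subseteq \cC\times\cS$. Then
\[
\bbP_{(\bm c,\bm s^{(n)})}[\cA]=\bbP[\widehat{\bm c}\in\widehat{\cB}]
\]
takes the same value for all $n\in[N]$.

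\textbf{Step 2: apply domain variability.} Suppose $\cA$ has positive probability in every domain. Then the contrapositive of Assumption~\ref{assump:domain_difference} forces $\cA=\cB\times\cS$ for some $\cB\subseteq\cC$.

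Next take $\widehat{\cB}$ ranging over small open balls around $\bm h_C(\bm c_0,\bm s_0)$. Each preimage is open by continuity of $\bm h_C$, and it has positive mass whenever the densities are positive on the relevant supports. Each preimage is therefore a cylinder $\cB\times\cS$, so it contains $(\bm c_0,\bm s)$ for every $\bm s$. Shrinking the balls gives $\bm h_C(\bm c_0,\bm s)=\bm h_C(\bm c_0,\bm s_0)$ for all $\bm s$. This lets us define $\bm\gamma(\bm c)\triangleq \bm h_C(\bm c,\cdot)$, which is well defined on $\cC$, continuous, and the same map for every $n$.

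The delicate point here is the "positive probability in all domains" caveat. It has to be handled by working on the common support and invoking continuity, together with the openness and simple-connectedness of $\cC$ and $\cS^{(n)}$, to extend the equality everywhere.

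\textbf{Step 3: invertibility of $\bm\gamma$.} Since $\bm h_C$ no longer depends on $\bm s$, the Jacobian of $\bm h$ is block lower-triangular:
\[
\bm J\bm h=\begin{bmatrix}\bm J\bm\gamma & \bm 0\\ \ast & \bm J_{\bm s}\bm h_S\end{bmatrix}.
\]
Because $\bm h$ is a bijection between $(d_C+d_S)$-dimensional manifolds with regular push-forward densities, $\bm J\bm h$ is nonsingular. Hence both diagonal blocks are nonsingular, and in particular $\bm J\bm\gamma$ is invertible everywhere, so $\bm\gamma$ is a local diffeomorphism.

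I expect global injectivity to be the main obstacle. I would first try the following argument. Suppose $\bm\gamma(\bm c_1)=\bm\gamma(\bm c_2)$ with $\bm c_1\neq\bm c_2$. Then $\bm h$ maps both fibers $\{\bm c_i\}\times\cS^{(n)}$ into the same slice $\{\widehat{\bm c}\}\times\widehat{\cS}^{(n)}$. A bijective $\bm h$ must cover all of $\widehat{\cC}\times\widehat{\cS}^{(n)}$, and by invariance of domain each fiber's image is open in the slice. A connectedness argument on the slice should then produce a contradiction with bijectivity.

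If that fails, I would fall back on the regular-density bijection result, Proposition~\ref{prop:zimmermann}, applied to $\bm\gamma:\cC\to\widehat{\cC}$. The push-forward of $p_{\bm c}$ under $\bm\gamma$ equals the regular law of $\widehat{\bm c}$, and $\cC$ is simply connected, so $\bm\gamma$ would be a bijection. This yields \eqref{eq:content_identifiability_gamma}.
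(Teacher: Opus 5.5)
Your proposal is correct and follows the argument the paper imports from Part~1 of Theorem~3.3 in \cite{shrestha2025content}: the learned content law is domain-invariant under \eqref{eq:dm_data_dec}, domain variability applied to the preimages $\bm h_C^{-1}(\widehat{\cB})$ shows that $\bm h_C$ ignores the style, and then $\bm\gamma$ is invertible. Your invariance-of-domain/connectedness argument on the slice $\{\widehat{\bm c}\}\times\widehat{\cS}^{(n)}$ already gives global injectivity, so the fallback to Proposition~\ref{prop:zimmermann} is not needed, and the positivity caveat you flag is harmless because $\bbP_{(\bm c,\s^{(n)})}[\cA]=\bbP[\widehat{\bm c}\in\widehat{\cB}]>0$ for any ball $\widehat{\cB}$ centered at a point of $\widehat{\cC}$, where the learned content has full support.
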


In particular, \eqref{eq:content_identifiability_gamma} implies that the extracted content
$\widehat{\bm f}_{\rm C}(\bm x^{(n)})$ depends only on the shared variable $\bm c$ and is invariant to the
domain-specific style variable $\bm s^{(n)}$.

Next we will use this result along with differential independence to identify the style.

\noindent
\textbf{Step 2 (Style identifiability).}

For each domain $n$, we learn the following style mapping
\[
     \bm {\widehat s}^{(n)}
    := 
     \widehat{\bm f}_S\!\big(\bm g(\bm c, \bm s^{(n)})\big) =\bm \delta(\bm c, \bm s^{(n)}),
\]
and the learned content is
\[
    \widehat{\bm c}
    = 
    \widehat{\bm f}_C\!\big(\bm g(\bm c, \bm s^{(n)})\big) = \bm \gamma(\bm c).
\]
We have the following:
\begin{equation}\label{eq:reconstruction_identity}
    \bm g(\bm c, \bm s^{(n)})
    =
    \widehat{\bm g}\!\big(
        \widehat{\bm c} = \bm \gamma(\bm c),~
        \bm {\widehat s}^{(n)} = \bm \delta(\bm c, \bm s^{(n)})
    \big),
\end{equation}
where $\bm \gamma(\bm c)$ is obtained from the content-identifiability result.

Taking the Jacobian of both sides of~\eqref{eq:reconstruction_identity}
with respect to $\bm s^{(n)}$ and applying the multivariate chain rule gives
\begin{equation}\label{eq:chain_rule}
    \bm J_{\bm s^{(n)}} \bm g(\bm c, \bm s^{(n)})
    =
    \bm J_{\bm {\widehat s}^{(n)}} 
    \widehat{\bm g}\!\big(
        \bm \gamma(\bm c), \bm \delta(\bm c, \bm s^{(n)})
    \big)
    \bm J_{\bm s^{(n)}} \bm \delta(\bm c, \bm s^{(n)}),
\end{equation}
since $\bm J_{\bm s^{(n)}}\bm \gamma(\bm c)=0$ (the learned content $\widehat{\bm c}=\bm \gamma(\bm c)$
is independent of $\bm s^{(n)}$).

From~\eqref{eq:chain_rule} and the property $\mathcal R(\bm A \bm B)\subseteq\mathcal R(\bm A)$,
we have
\[
   \mathcal R\!\left(
       \bm J_{\bm s^{(n)}} \bm g(\bm c, \bm s^{(n)})
   \right)
   \subseteq
   \mathcal R\!\left(
       \bm J_{\bm {\widehat s}^{(n)}} 
       \widehat{\bm g}\!\big(
           \bm \gamma(\bm c), \bm \delta(\bm c, \bm s^{(n)})
       \big)
   \right).
\]

From the assumption that
$\bm J_{\bm s^{(n)}} \bm g$ has full column rank $d_S$ and rank($\bm J_{\bm {\widehat s}^{(n)}} \widehat{\bm g}~ \bm J_{\bm {s}^{(n)}} \bm \delta$) = min$\{ {\rm rank}(\bm J_{\bm {\widehat s}^{(n)}}), {\rm rank}(\bm J_{\bm {s}^{(n)}} \bm \delta) \} = d_S$.
Since the left-hand side equals $d_{\rm S}$, both factors must satisfy
\begin{equation}\label{eq:rank_conditions}
\operatorname{rank}\!\Big(\bm J_{\widehat{\bm s}}\widehat{\bm g}\big(\bm\gamma(\bm c),\bm\delta(\bm c,\bm s^{(n)})\big)\Big)=d_{\rm S},
\qquad
\operatorname{rank}\!\Big(\bm J_{\bm s^{(n)}}\bm\delta(\bm c,\bm s^{(n)})\Big)=d_{\rm S}.
\end{equation}
Hence the above inclusion 
becomes an equality:
\begin{equation}\label{eq:range_equal}
   \mathcal R\!\left(
       \bm J_{\bm s^{(n)}} \bm g(\bm c, \bm s^{(n)})
   \right)
   =
   \mathcal R\!\left(
       \bm J_{\bm {\widehat s}^{(n)}} 
       \widehat{\bm g}\!\big(
           \bm \gamma(\bm c), \bm \delta(\bm c, \bm s^{(n)})
       \big)
   \right).
\end{equation}

Taking the Jacobian of both sides of~\eqref{eq:reconstruction_identity}
with respect to $\bm c$ and applying the multivariate chain rule gives,
\begin{equation}\label{eq:chain_rule_c}
    \bm J_{\bm c} \bm g(\bm c, \bm s^{(n)})
    =
    \bm J_{\bm {\widehat s}^{(n)}} 
    \widehat{\bm g}\!\big(
        \bm \gamma(\bm c), \bm \delta(\bm c, \bm s^{(n)})
    \big)
    \bm J_{c} \bm \delta(\bm c, \bm s^{(n)}) + \bm J_{\widehat{\bm c}} 
    \widehat{\bm g}\!\big(
        \bm \gamma(\bm c), \bm \delta(\bm c, \bm s^{(n)})
    \big)
    \bm J_{c} \bm \gamma(\bm c).
\end{equation}

 Enforcing the differential-independence regularization in \eqref{eq:differential_indep} we get:
\begin{equation}\label{eq:orthogonality_jacobians}
\Big(\bm J_{\widehat{\bm s}^{(n)}}\widehat{\bm g}(\widehat{\bm c},\widehat{\bm s}^{(n)})\Big)^{\!\top}\,
\bm J_{\widehat{\bm c}}\widehat{\bm g}(\widehat{\bm c},\widehat{\bm s})
\;=\;\bm 0
\qquad\text{for all }(\widehat{\bm c},\widehat{\bm s})\in\widehat{\mathcal C}\times\widehat{\mathcal S}.
\end{equation}

Let $P_{\widehat{\mathcal S}}(\cdot)$ denote the orthogonal projector onto
$\mathcal R\!\Big(\bm J_{\widehat{\bm s}}\widehat{\bm g}\big(\bm\gamma(\bm c),\bm\delta(\bm c,\bm s^{(n)})\big)\Big)$.
By Assumption~\ref{assump:style_subspace} (style directions are orthogonal to content directions in the true generator),
we have
\begin{equation}\label{eq:proj_true_content_zero}
P_{\widehat{\mathcal S}}\!\Big(\bm J_{\bm c}\bm g(\bm c,\bm s^{(n)})\Big)=\bm 0,
\end{equation}
because \eqref{eq:range_equal} identifies the learned style tangent space with the true style tangent space.
Moreover, by \eqref{eq:orthogonality_jacobians}, the range of $\bm J_{\widehat{\bm c}}\widehat{\bm g}$ is orthogonal to the range of
$\bm J_{\widehat{\bm s}}\widehat{\bm g}$, hence
\begin{equation}\label{eq:proj_learned_content_zero}
P_{\widehat{\mathcal S}}\!\Big(
\bm J_{\widehat{\bm c}}\widehat{\bm g}\!\big(\bm\gamma(\bm c),\bm\delta(\bm c,\bm s^{(n)})\big)\,\bm J_{\bm c}\bm\gamma(\bm c)
\Big)=\bm 0.
\end{equation}

Applying $P_{\widehat{\mathcal S}}$ to both sides of \eqref{eq:chain_rule_c} and using
\eqref{eq:proj_true_content_zero}--\eqref{eq:proj_learned_content_zero} yields
\begin{equation}\label{eq:projected_chain_rule}
P_{\widehat{\mathcal S}}\!\Big(
\bm J_{\widehat{\bm s}}\widehat{\bm g}\!\big(\bm\gamma(\bm c),\bm\delta(\bm c,\bm s^{(n)})\big)\,\bm J_{\bm c}\bm\delta(\bm c,\bm s^{(n)})
\Big)
\;=\;\bm 0.
\end{equation}

Since the columns of $\bm J_{\widehat{\bm s}}\widehat{\bm g}(\cdot)$ lie in $\mathcal R(\bm J_{\widehat{\bm s}}\widehat{\bm g}(\cdot))$ by definition,
the projector acts as the identity on this term, and \eqref{eq:projected_chain_rule} simplifies to
\begin{equation}\label{eq:Js_fullrank_implies}
\bm J_{\widehat{\bm s}}\widehat{\bm g}\!\big(\bm\gamma(\bm c),\bm\delta(\bm c,\bm s^{(n)})\big)\,\bm J_{\bm c}\bm\delta(\bm c,\bm s^{(n)})
\;=\;\bm 0.
\end{equation}

By \eqref{eq:rank_conditions}, $\bm J_{\widehat{\bm s}}\widehat{\bm g}(\cdot)$ has full column rank $d_{\rm S}$; therefore the only solution is
\begin{equation}\label{eq:Jc_delta_zero}
\bm J_{\bm c}\bm\delta(\bm c,\bm s^{(n)}) \;=\; \bm 0,
\end{equation}
i.e., $\bm\delta(\bm c,\bm s^{(n)})$ is independent of $\bm c$. Hence there exists a map
$\bm\delta:\mathcal S\to \widehat{\mathcal S}$ such that
\begin{equation}\label{eq:delta_content_independent}
\bm\delta(\bm c,\bm s^{(n)}) \;=\; \bm\delta(\bm s^{(n)}),
\qquad\text{and thus}\qquad
\widehat{\bm s}^{(n)} \;=\; \bm\delta(\bm s^{(n)}).
\end{equation}

Since $\bm \delta = \widehat{\bm f} \circ \bm g$ is an invertible function and $\mathcal S^{(n)}$ is a simply connected set. The learned style code is an invertible and content-independent transformation of the true style variable:
\[
\widehat{\bm s}^{(n)} \;=\; \bm\delta(\bm s^{(n)}).
\]
This completes the proof.

\end{proof}

\section{Proof of Theorem \ref{thm:relaxed_identifiability}}
\label{app:proof_relaxed_identifiability}

\mainthmrelaxed*

\begin{proof}
(a) For content identifiability the proof follows the similar steps as in Step 1 in Proof of Theorem \ref{thm:identifiability}.

(b) Now we establish the robustness of style identifiability when tangent subspaces of content and style are not exactly orthogonal to each other. We start from the Assumption \ref{assump:relaxed_style_subspace} where subspaces are approximately orthogonal in the sense that their smallest principal angle satisfies
\[\angle \!\big(\mathcal R(\bm J_{\bm c} \bm g(\bm c,\bm s^{(n)})),~~\mathcal R(\bm J_{\bm s^{(n)}} \bm g(\bm c,\bm s^{(n)})) \big)
\ge \min \{ \frac{\pi}{2} \pm \xi \} = \frac{\pi}{2}-\xi,\]
for some small $\xi\ge 0$.

Let \(P_{\mathcal C}\) and \(P_{\mathcal S}\) denote the orthogonal projection matrices onto
$
\mathcal C=\mathcal R(\bm J_{\bm c} \bm g(\bm c,\bm s^{(n)})),~~
\mathcal S=\mathcal R(\bm J_{\bm s^{(n)}} \bm g(\bm c,\bm s^{(n)})).
$

Now, we start from the chain-rule identity obtained in \eqref{eq:chain_rule_c} in the proof of style identifiability as,
\[
\bm J_{\bm c}\bm g(\bm c,\bm s^{(n)})
=
\bm J_{\widehat{\bm s}}\widehat{\bm g}\!\big(\bm \gamma(\bm c),\bm \delta(\bm c,\bm s^{(n)})\big)\,
\bm J_{\bm c}\bm \delta(\bm c,\bm s^{(n)})
+
\bm J_{\widehat{\bm c}}\widehat{\bm g}\!\big(\bm \gamma(\bm c),\bm \delta(\bm c,\bm s^{(n)})\big)\,
\bm J_{\bm c}\bm \gamma(\bm c).
\]
Projecting onto the learned style subspace $\widehat{\mathcal S}$ and using the orthogonality of the learned decoder gives
\[
\bm J_{\widehat{\bm s}}\widehat{\bm g}\!\big(\bm \gamma(\bm c),\bm \delta(\bm c,\bm s^{(n)})\big)\,
\bm J_{\bm c}\bm \delta(\bm c,\bm s^{(n)})
=
P_{\widehat{\mathcal S}}\!\Big(\bm J_{\bm c}\bm g(\bm c,\bm s^{(n)})\Big).
\]
From the equality of style subspaces established in \eqref{eq:range_equal},
\[
P_{\widehat{\mathcal S}}\!\Big(\bm J_{\bm c}\bm g(\bm c,\bm s^{(n)})\Big)
=
P_{\mathcal S}\!\Big(\bm J_{\bm c}\bm g(\bm c,\bm s^{(n)})\Big).
\]
Since the columns of $\bm J_{\bm c}\bm g(\bm c,\bm s^{(n)})$ lie in $\mathcal R(\bm J_{\bm c} g(\bm c,\bm s^{(n)}))$, we can write
\[
\bm J_{\bm c}\bm g(\bm c,\bm s^{(n)})
=
P_{\mathcal C}\!\Big(\bm J_{\bm c}\bm g(\bm c,\bm s^{(n)})\Big),
\]
and hence
\[
P_{\mathcal S}\!\Big(\bm J_{\bm c}\bm g(\bm c,\bm s^{(n)})\Big)
=
P_{\mathcal S}P_{\mathcal C}\!\Big(\bm J_{\bm c}\bm g(\bm c,\bm s^{(n)})\Big).
\]

The assumption
$\angle \!\big(\mathcal R(\bm J_{\bm c} g(\bm c,\bm s^{(n)})),~~\mathcal R(\bm J_{\bm s^{(n)}} g(\bm c,\bm s^{(n)})) \big)
\ge \frac{\pi}{2}-\xi$
is equivalent to saying that the maximum possible overlap between a unit vector in
\(\mathcal C\) and a unit vector in \(\mathcal S\) is at most \(\sin\xi\). In terms of
orthogonal projections, this can be written as
\[
\|P_{\mathcal S}P_{\mathcal C}\|_{\mathrm{op}}
=
\|P_{\mathcal C}P_{\mathcal S}\|_{\mathrm{op}}
\le \cos(\frac{\pi}{2} - \xi) = \sin\xi .
\]
where $\|.\|_{\rm op}$ is the operator norm. Therefore,
\begin{align*}
\big\|
\bm J_{\widehat{\bm s}}\widehat{\bm g}\!\big(\bm \gamma(\bm c),\bm \delta(\bm c,\bm s^{(n)})\big)\,
\bm J_{\bm c}\bm \delta(\bm c,\bm s^{(n)})
\big\|_2
&=
\big\|P_{\mathcal S}P_{\mathcal C}\!\big(\bm J_{\bm c}\bm g(\bm c,\bm s^{(n)})\big)\big\|_2 \\
&\le
\|P_{\mathcal S}P_{\mathcal C}\|_{\mathrm{op}}\,
\|\bm J_{\bm c}\bm g(\bm c,\bm s^{(n)})\|_2 \\
&\le
\sin\xi\,
\|\bm J_{\bm c}\bm g(\bm c,\bm s^{(n)})\|_2.
\end{align*}

Because \(\bm J_{\widehat{\bm s}}\widehat{\bm g}\!\big(\bm \gamma(\bm c),\bm \delta(\bm c,\bm s^{(n)})\big)\) has full column rank, its smallest singular value is strictly positive, and thus
\[
\big\|
\bm J_{\widehat{\bm s}}\widehat{\bm g}\!\big(\bm \gamma(\bm c),\bm \delta(\bm c,\bm s^{(n)})\big)\,
\bm J_{\bm c}\bm \delta(\bm c,\bm s^{(n)})
\big\|_2
\;\ge\;
\sigma_{\min}\!\Big(
\bm J_{\widehat{\bm s}}\widehat{\bm g}\!\big(\bm \gamma(\bm c),\bm \delta(\bm c,\bm s^{(n)})\big)
\Big)\,
\big\|\bm J_{\bm c}\bm \delta(\bm c,\bm s^{(n)})\big\|_2.
\]
Combining the above two bounds yields
\[
\big\|\bm J_{\bm c}\bm \delta(\bm c,\bm s^{(n)})\big\|_2
\;\le\;
\frac{\sin\xi}{
\sigma_{\min}\!\Big(
\bm J_{\widehat{\bm s}}\widehat{\bm g}\!\big(\bm \gamma(\bm c),\bm \delta(\bm c,\bm s^{(n)})\big)
\Big)
}
\,\|\bm J_{\bm c}\bm g(\bm c,\bm s^{(n)})\|_2.
\]
This completes the proof.

\end{proof}

\section{Implementation Details}

\subsection{Jacobian Regularization}\label{app:jacobian_implementation}
\subsubsection{Properties of the Proposed $\cL_{\rm orth}$ in \eqref{eq:full_jacobian_loss}}
\label{app:lorth_properties}

\textbf{Scale invariance.} The proposed differential-independence
regularizer $\cL_{\rm orth}$ is (approximately) scale-invariant. Ignoring the
small constant $\epsilon > 0$ and using the Cauchy--Schwarz inequality, we have
\begin{align}
\|\bm J_{\bm s^{(n)}}^\top \bm J_{\bm c}\|^2_F \;\le\;
\|\bm J_{\bm s^{(n)}}\|^2_F\,\|\bm J_{\bm c}\|^2_F,
\end{align}
which implies
\begin{align}
\frac{\|\bm J_{\bm s^{(n)}}^\top \bm J_{\bm c}\|_F^2}
{\|\bm J_{\bm c}\|_F^2\,\|\bm J_{\bm s^{(n)}}\|_F^2}
\;\leq\; 1,
\end{align}
whenever $\|\bm J_{\bm c}\|_F,\|\bm J_{\bm s^{(n)}}\|_F>0$. This normalization keeps the
regularizer on a bounded scale and makes it less sensitive to the overall
magnitudes of the Jacobians, which helps prevent the term from dominating the
objective due to uniform rescaling of the latent parameterization.

\textbf{Non-trivial solutions.} The denominator $\|\bm J_{\widehat{\bm c}}\|_{\rm F}^2\,\|\bm J_{\widehat{\bm s}^{(n)}}\|_{\rm F}^2+\epsilon$ in $\cL_{\rm orth}$
avoids the trivial solution where the loss is reduced by making the Jacobians nearly zero. Since, the distribution-matching objective in \eqref{eq:data_dist_match} implicitly encourages $\widehat{\bm g}$ to be bijective
(see Appendix~\ref{app:g_hat_invertibility}), which further discourages degenerate (collapsed) content and style representations.
For small $\epsilon$, the loss is roughly unchanged if both Jacobians are scaled down together,
so simply shrinking $\bm J_{\widehat{\bm c}}$ and $\bm J_{\widehat{\bm s}^{(n)}}$ does not decrease $\cL_{\rm orth}$.

\subsubsection{Stochastic Approximation of Differential Independence Loss}
\label{app:hutchinson}
For each data sample we sample the noise probes $$\bm v\in \bbR^d \sim \mathcal N(\bm 0, \bm I_d)$$ such that $\E[\bm v\bm v^\top]=\bm I_d$. We then compute the Jacobian vector product for content $\bm c$ and $\bm s$ using vector jacobian product (VJP) as follows:
\[
\bm J_{\bm c}^\top \bm v = \nabla_{\bm c}\langle \bm g(\bm c,\bm s),\bm v\rangle\in\bbR^{d_C},
\qquad
\bm J_{\bm s}^\top \bm v = \nabla_{\bm s}\langle \bm g(\bm c,\bm s),\bm v\rangle\in\bbR^{d_S}.
\]
We then estimate our differential independence regularization as follows:
\begin{align}\label{eq:differential_indep_reg}
    \frac{ \|\bm J_{\bm { s}^{(n)}}^\top \bm J_{\bm { c}} \|_F^2  }{  \|\bm J_{\bm { c}} \|_F^2  \|\bm J_{\bm { s}^{(n)}}  \|_F^2 + \epsilon }  = \frac{  \|\bbE_{v} [ \bm J_{\bm { s}^{(n)}}^\top \bm v  (\bm J_{\bm { c}}^\top \bm v)^\top] \|_{F}^2  }{ \bbE_{v} \left [ \|\bm J_{\bm { c}}^\top \bm v \|_2^2 \right ]  \bbE_{v} \left [ \|\bm J_{\bm { s}^{(n)}}^\top \bm v \|_2^2 \right ] + \epsilon }
\end{align}

The equivalence in \eqref{eq:differential_indep_reg} can be shown easily. For the numerator,
\begin{align*}
    \|\bbE_{v} [ \bm J_{\bm { s}^{(n)}}^\top \bm v  (\bm J_{\bm { c}}^\top \bm v)^\top] \|_{F}^2 = \|\bbE_{v} [ \bm J_{\bm { s}^{(n)}}^\top \bm v \bm v^\top \bm J_{\bm { c}}] \|_{F}^2 = \|\bm J_{\bm { s}^{(n)}}^\top \bbE_{v} [  \bm v \bm v^\top ] \bm J_{\bm { c}} \|_{F}^2 = \|\bm J_{\bm { s}^{(n)}}^\top \bm J_{\bm { c}} \|_{F}^2.
\end{align*}

Similarly for the normalization terms,
\begin{align*}
\|\bm J_{\bm { c}}^\top \bm v\|_2^2
&= (\bm J_{\bm { c}}^\top \bm v)^\top (\bm J_{\bm { c}}^\top \bm v) = \bm v^\top \bm J_{\bm { c}} \bm J_{\bm { c}}^\top \bm v .
\end{align*}

Since a scalar equals its trace,
\[
\bm v^\top \bm J_{\bm { c}} \bm J_{\bm { c}}^\top \bm v
= \mathrm{tr}(\bm v^\top \bm J_{\bm { c}} \bm J_{\bm { c}}^\top \bm v)
= \mathrm{tr}( \bm J_{\bm { c}} \bm J_{\bm { c}}^\top \bm v \bm v^\top),
\]
where we used cyclic invariance of the trace.
Taking expectation and using linearity,
\begin{align*}
\mathbb{E}_{\bm v}[\bm v^\top \bm J_{\bm { c}} \bm J_{\bm { c}}^\top \bm v]
&= \mathrm{tr}\!\left(\bm J_{\bm { c}} \bm J_{\bm { c}}^\top\,\mathbb{E}_{\bm v}[\bm v \bm v^\top]\right) \\
&= \mathrm{tr}(\bm J_{\bm { c}} \bm J_{\bm { c}}^\top) = \mathrm{tr}(\bm J_{\bm { c}}^\top \bm J_{\bm { c}}) \\
&= \|\bm J_{\bm { c}} \|^2_F.
\end{align*}

Similarly, we can write $\|\bm J_{\bm { s}^{(n)}}^\top \bm v\|_2^2 = \| \bm J_{\bm { s}^{(n)}} \|_{F}^2.$

\subsubsection{Computational Complexity for Jacobian Computation}
\label{app:complexity_jac}
We want to learn a generator $\bm g:\bbR^{d_C}\times \bbR^{d_S}\to \bbR^{d}$ where $d$ is the dimension of data and $d \gg d_C, d_S$ although the data and latent lie on the same low-dimensional manifold. Denote the Jacobians
\[
\bm J_{\bm c}:=\frac{\partial \bm g}{\partial \bm c}\in\bbR^{d\times d_C},
\qquad
\bm J_{\bm s}:=\frac{\partial \bm g}{\partial \bm s}\in\bbR^{d\times d_S}.
\]

\smallskip
\noindent\textbf{Explicit Jacobian computation.}
For each sample, materializing $\bm J_{\bm c}\in\bbR^{d\times d_C}$ and $\bm J_{\bm s}\in\bbR^{d\times d_S}$ requires storing $d(d_C+d_S)$ entries, hence for a batch:
\[
\text{Memory}_{\rm explicit}=\mathcal{O}\!\big(B\,d(d_C+d_S)\big).
\]
Using reverse-mode differentiation, forming a full Jacobian requires $\mathcal{O}(d)$ backward passes per sample (one per output coordinate), so the batch FLOPs are
\[
\text{FLOPs}_{\rm explicit}=\mathcal{O}\!\big(B\,d\,F_{\rm bwd}\big)\ \text{to obtain }\bm J_{\bm c},\bm J_{\bm s}\ \text{(up to constants)}.
\]
Given explicit Jacobians, the cross-term $\bm J_{\bm s}^\top \bm J_{\bm c}\in\bbR^{d_S\times d_C}$ costs a matrix multiply
$(d_S\times d)\cdot(d\times d_C)$ per sample, hence
\[
\text{FLOPs}\big(\bm J_{\bm s}^\top \bm J_{\bm c}\big)=\mathcal{O}\!\big(B\,d\,d_S d_C\big).
\]

\noindent\textbf{Noise probing via VJP (ours).}

For each probe $\bm v\in\bbR^{d}$ with $\E[\bm v\bm v^\top]=I$, we compute
$\bm J_{\bm c}^\top \bm v$ and $\bm J_{\bm s}^\top \bm v$ via one backward pass of the scalar $\langle \bm g(\bm c,\bm s),\bm v\rangle$.
With $K$ probes per sample, batch FLOPs are
\[
\text{FLOPs}_{\rm probe}=\mathcal{O}\!\big(B\,K\,F_{\rm bwd}\big).
\]
Forming the outer products $(\bm J_{\bm s}^\top \bm v)(\bm J_{\bm c}^\top \bm v)^\top\in\bbR^{d_S\times d_C}$ adds
$\mathcal{O}(B\,K\,d_S d_C)$ FLOPs, typically negligible relative to backprop through $\bm g$, hence
\[
\text{FLOPs}_{\rm probe}=\mathcal{O}\!\big(B\,K\,F_{\rm bwd}+B\,K\,d_S d_C\big)\approx \mathcal{O}\!\big(B\,K\,F_{\rm bwd}\big).
\]
The additional memory we need is
\[
\text{Memory}_{\rm probe}=\mathcal{O}\!\big(B\,d + B\,d_S d_C\big),
\]
since we store the probe vectors in $\bbR^{d}$ and an accumulator in $\bbR^{d_S\times d_C}$, but never store $d\times d_C$ or $d\times d_S$ Jacobians.

Explicit Jacobians scale as $\mathcal{O}(B\,d(d_C+d_S))$ memory and $\mathcal{O}(B\,d)$ backward passes,
whereas noise probing replaces the $\mathcal{O}(d)$ factor by $\mathcal{O}(K)$ (with $K\ll d$) and reduces the memory by $(d_C + d_S)$ times.

\section{Experiment Details}

\subsection{Computational Resources for Experiments}
All of our experiments are conducted on a machine with 1$\times$ NVIDIA V100 (32GB VRAM) Intel(R) Xeon(R) Platinum 8168 CPU @ 2.70GHz, 1.5 TB RAM.

\subsection{Two-domain (Digit Color + Background Color) MNIST Experiments} \label{app:dependent_mnist}

\begin{figure}[t]
\centering

\begin{minipage}[t]{0.52\linewidth}
\vspace{0pt} 
\centering
\captionof{table}{RGB prototype pools used for digit color ($\mathcal D$) and background color ($\mathcal B$).}
\label{tab:color_prototypes}
\small
\begin{tabular}{c c l c}
\toprule
$\mathcal D$ & $\mathcal{B}$ & Name (informal) & RGB $(R,G,B)$ \\
\midrule
$d_0$ & $b_3$ & red-ish & $(255,\,50,\,50)$ \\
$d_1$ & $b_2$ & blue-ish & $(50,\,50,\,255)$ \\
$d_2$ & $b_1$ & green-ish & $(50,\,255,\,50)$ \\
$d_3$ & $b_0$ & yellow-ish & $(255,\,255,\,50)$ \\
$d_4$ & $b_9$ & magenta-ish & $(255,\,50,\,255)$ \\
$d_5$ & $b_8$ & cyan-ish & $(50,\,255,\,255)$ \\
$d_6$ & $b_7$ & orange-ish & $(255,\,140,\,50)$ \\
$d_7$ & $b_6$ & purple-ish & $(140,\,50,\,255)$ \\
$d_8$ & $b_5$ & teal & $(0,\,170,\,170)$ \\
$d_9$ & $b_4$ & olive/mustard & $(180,\,160,\,20)$ \\
\bottomrule
\end{tabular}
\end{minipage}\hfill
\begin{minipage}[t]{0.45\linewidth}
\vspace{0pt} 
\centering
\includegraphics[width=\linewidth]{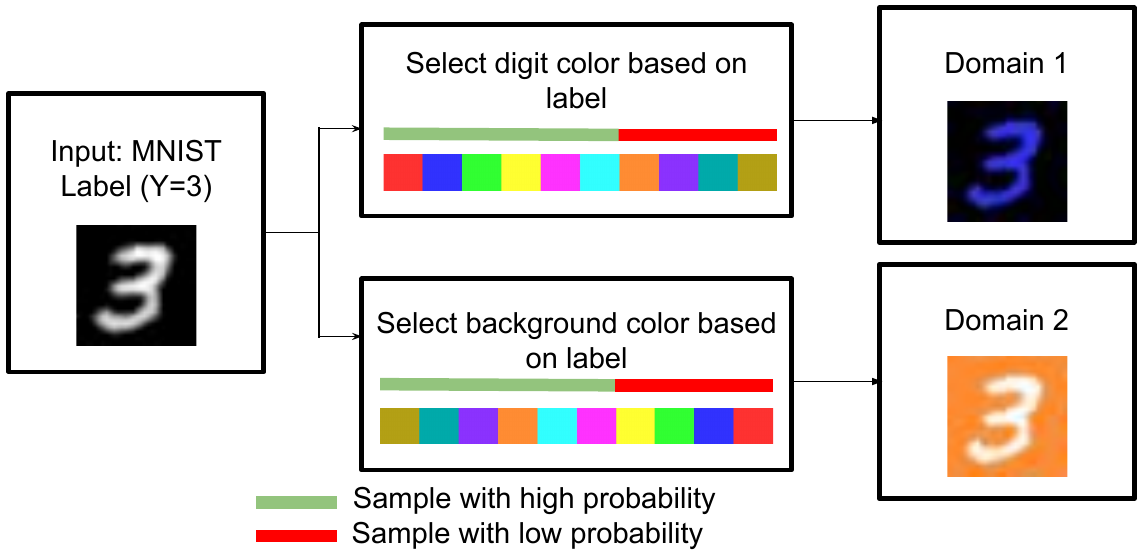}
\captionof{figure}{High-level construction of two-domain MNIST with dependent content and style. From a grayscale MNIST digit (content $Y=3$), we generate two domains by applying domain-specific styles: colored digits (Domain~1) and colored digits (Domain~2). Style variables are sampled in a label-dependent (biased) manner to induce controlled content-style dependence.}
\label{fig:data_gen_illus}
\end{minipage}

\end{figure}

We create a two-domain variant of MNIST in which the \emph{content} is the digit identity, while the \emph{style} is domain-specific and \emph{label-dependent}: color of digit in Domain~1 and color of background in Domain~2. Starting from the MNIST training set $\{(x_i,y_i)\}_{i=1}^N$ with $y_i\in\{0,\dots,9\}$ and grayscale images $x_i$, we resize each image to $32\times 32$ and normalize intensities to $x_i\in[0,1]^{32\times 32}$. Each domains contain 60,000 MNIST digit images. The high level construction of dependent MNIST is shown in Fig.~\ref{fig:data_gen_illus}.

\textbf{Style candidate pools.}
To generate the colored digit, we define a discrete pool of 10 RGB prototypes
$
\mathcal{D}=\{d_0,\dots,d_9\}.
$
For colored background, we do the same and fix 10 RGB prototypes $\mathcal B = \{b_0,\dots,b_9\}$. The color pools $\mathcal{D}$ and $\mathcal{B}$ were constructed using the colors as listed in Table~\ref{tab:color_prototypes}.

\textbf{Label-dependent allowed sets.}
To inject a controlled label--style dependence, we assign to each label $y$ an \emph{allowed} subset of colors by removing three consecutive candidates (with wrap-around and indices modulo $10$ for colors):
\[
\mathcal{D}_y \;=\; \mathcal{D}\setminus\{d_y,d_{y+1},d_{y+2}\},\qquad
\mathcal{B}_y \;=\; \{b_0,\dots,b_9\}\setminus\{b_y,b_{y+1},b_{y+2}\}.
\]
Hence, for every $y$, the allowed-set sizes satisfy $|\mathcal{B}_y|=|\mathcal{D}_y|=7$.

\textbf{Domain construction (content shared, style differs).}
We generate two domains from the same underlying digits:
\begin{itemize}
    \item \textbf{Domain 1 (Colored digits).} For each $(x_i,y_i)$, we sample a digit color $D_i$ using the label-dependent rule below and map the digit color to RGB prototype $D_i$, i.e.
    $\bm x_i^{(1)}=\text{Colorize Digit}(x_i;D_i)$.
    \item \textbf{Domain 2 (Colored background).} For each $(x_i,y_i)$, we sample a background color $B_i$ using the label-dependent rule below and set
    $\bm x_i^{(2)}=\text{Colorize Background}(x_i;B_i)$ (by mapping the background color of $x_i$ to RGB prototype $B_i$).
\end{itemize}
Thus, the two domains share the same digit identity (content), while the style is digit color in Domain~1 and background color in Domain~2, both correlated with the label.

\textbf{Label-dependent (biased) sampling.}
Given a digit label $Y=y$, we draw a digit color $D$ and a background color $B$ from mixtures that favor the allowed sets $\mathcal{D}_y$ and $\mathcal{B}_y$ of the \emph{true} label:
\[
D\mid(Y=y)\sim
p_{\mathrm{digit}}\cdot\mathrm{Unif}(\mathcal{D}_y)\;+\;
(1-p_{\mathrm{digit}})\cdot\frac{1}{9}\sum_{j\neq y}\mathrm{Unif}(\mathcal{D}_j),
\]
\[
B\mid(Y=y)\sim
p_{\mathrm{bg}}\cdot\mathrm{Unif}(\mathcal{C}_y)\;+\;
(1-p_{\mathrm{bg}})\cdot\frac{1}{9}\sum_{j\neq y}\mathrm{Unif}(\mathcal{B}_j),
\]
with $p_{\mathrm{digit}}=p_{\mathrm{bg}}=0.8$.
Equivalently, with probability $p_{\mathrm{digit}}$ (resp.\ $p_{\mathrm{bg}}$) we sample uniformly from the allowed set associated with the true label; otherwise we first choose an \emph{incorrect} label $j\neq y$ uniformly from the remaining $9$ labels and then sample uniformly from $\mathcal{A}_j$ (resp.\ $\mathcal{C}_j$). In our implementation, the two style variables are conditionally independent given the label, i.e., $D\perp B\mid Y$.

This procedure induces a controlled correlation between content and style: the digit label $Y$ becomes predictive of both digit color $D$ and background color $B$ through the label-specific allowed sets, with the bias strength governed by $(p_{\mathrm{digit}},p_{\mathrm{bg}})$, while the underlying digit shape (content) remains unchanged.

\textbf{Evaluations.} To measure the quality of generated images, we use FID scores \cite{heusel2017fid}; the lower the FID score is, the better. In addition, to measure the style diversity of generations, we use the LPIPS distance \cite{zhang2018lpips} between pairs of images generated with the same content $\widehat{\mathbf{c}}$ but different styles $\widehat{\s}^{(n)}$. We report the average over calculated LPIPS distances; the larger the LPIPS score is, the more diverse the images are (in terms of style).

\noindent
\textbf{Neural network architectures.}
We adopt DCGAN architectures tailored to $32\times 32$ MNIST-style images. The content encoder $\bm e_C:\mathbb{R}^{d_C}\!\rightarrow\!\mathbb{R}^{d_C}$,  style encoders
$\bm e_S^{(n)}:\mathbb{R}^{d_S}\!\rightarrow\!\mathbb{R}^{d_S}$ and decoders $\bm t_C$ and $\bm t_S^{(n)}$ are linear layers. The generator and discriminator backbones are summarized in
Table~\ref{tab:arch_q} and Table~\ref{tab:arch_d} respectively. Here,
\texttt{Conv} denotes a convolution, \texttt{BN} means batch normalization,
\texttt{lReLU} means LeakyReLU with slope $0.2$, \texttt{Up} means nearest-neighbor upsampling by factor $2$,
and \texttt{Drop} means Dropout with probability $0.25$.

\noindent
\textbf{Training configuration.}
For both Block Independence GAN (B.I. GAN) \cite{shrestha2025content} and our CSDI-GAN, we train generator $\widehat{\bm g}$ for $500$ epochs with batch size $128$ using Adam
(learning rate $10^{-4}$, $\beta_1=0.5$, $\beta_2=0.999$).
We share 32 dimensional noise among content and style, i.e. $d_{C_1} = d_{C_2} = 48$ and $d_{S_1} = 16$, resulting in $d_C=96$ and $d_S=32$ for statistical dependency modeling of content and style. We set the hyperparameters of regularization term to be \(\lambda_{\mathrm{inv}} = 0.001\) and \(\lambda_{\mathrm{orth}} = 1.0\). For number of probes we set $K=8$ and $\epsilon = 10^{-8}$.

\textbf{Domain translation procedure.} Given a trained generator $\widehat{\g}$ and trained style encoders $\e_{S}^{(n)}$, domain translation is done via two-step process. In the first step, we solve for $(\widehat{\bm c}, \widehat{\s}^{(i)}) \triangleq \arg\min_{\tilde{\bm c}, \tilde{\s}^{(i)}} \text{div}(\widehat{\g}({\tilde{\bm c}}, \tilde{\s}^{(i)}), \x^{(i)})$, where $\text{div}(\cdot, \cdot)$ is a divergence measure, to obtain the content-style representation of an image $\x^{(i)}$. This step is usually referred to as GAN inversion \cite{xia2023ganinversion}), which can be implemented via many approaches. Following \cite{shrestha2025content}, we use a pre-trained VGG-16 convolutional neural network \cite{simonyan2015vgg16} for $\text{div}(\cdot, \cdot)$, and use Adam optimizer to solve this step. We similarly extract the style $\widehat{\s}^{(t)}$ from a target domain image. In the second step, we combine the extracted content $\widehat{\bm c}$ from the source image with the target style $\widehat{\s}^{(t)}$ to create the translated $\widehat{\x}^{(t)} = \widehat{\g}(\widehat{\bm c}, \widehat{\s}^{(t)})$.

\begin{table*}[t]
\centering
\renewcommand{\arraystretch}{1.15}
\setlength{\tabcolsep}{5pt} 

\begin{minipage}[t]{0.49\textwidth}
\centering
\caption{Generator $\widehat{\bm g}$ for MNIST experiment.}
\label{tab:arch_q}
\begin{tabularx}{\linewidth}{|>{\raggedright\arraybackslash}p{0.26\linewidth}|>{\raggedright\arraybackslash}X|}
\hline
\textbf{Block} & \textbf{Specification} \\
\hline
Linear & Concatenate $(\bm c,\bm s)$ and map to $128\times 2\times 2$ \\
\hline
BN, Up & Upsample to $4\times 4$ \\
\hline
Conv, BN, lReLU & $128$ filters, $3\times 3$, stride $1$, padding $1$ \\
\hline
Up, Conv, BN, lReLU & $128$ filters, $3\times 3$, stride $1$, padding $1$ (to $8\times 8$) \\
\hline
Up, Conv, BN, lReLU & $64$ filters, $3\times 3$, stride $1$, padding $1$ (to $16\times 16$) \\
\hline
Up, Conv, BN, lReLU & $32$ filters, $3\times 3$, stride $1$, padding $1$ (to $32\times 32$) \\
\hline
Conv, Tanh & $3$ filters, $3\times 3$, stride $1$, padding $1$ \\
\hline
\end{tabularx}
\end{minipage}\hfill
\begin{minipage}[t]{0.49\textwidth}
\centering
\caption{Discriminator $\widehat{\d}^{(n)}$ for MNIST experiment.}
\label{tab:arch_d}
\begin{tabularx}{\linewidth}{|>{\raggedright\arraybackslash}p{0.34\linewidth}|>{\raggedright\arraybackslash}X|}
\hline
\textbf{Block} & \textbf{Specification} \\
\hline
Label embedding & Map domain label to a $1\times 32\times 32$ spatial map \\
\hline
Concatenate & Concatenate image and embedding $\Rightarrow 4\times 32\times 32$ input \\
\hline
Conv, lReLU, Drop & $64$ filters, $3\times 3$, stride $2$, padding $1$ \\
\hline
Conv, lReLU, Drop, BN & $128$ filters, $3\times 3$, stride $2$, padding $1$ \\
\hline
Conv, lReLU, Drop, BN & $256$ filters, $3\times 3$, stride $2$, padding $1$ \\
\hline
Linear, Sigmoid & Fully-connected head to a scalar output \\
\hline
\end{tabularx}
\end{minipage}

\end{table*}

\textbf{Additional results.} We provide additional qualitative results for MNIST generation and translation tasks.
Fig.~\ref{fig:gen_mnist_app} compares generated images from B.I. GAN \cite{shrestha2025content}, as well as from CSDI-GAN with and without $\cL_{\rm orth}$.
Our proposal CSDI-GAN with $\cL_{\rm orth}$ and B.I. GAN can generate \emph{diverse styles} while keeping the \emph{content fixed}, indicating that distribution matching helps capture content structure.
However, when we \textit{fix style} and \textit{vary content}, B.I. GAN fails to preserve the intended disentanglement, producing digits whose content remains coupled with style.

Notably, CSDI-GAN without the content-style differential independence regularizer $\cL_{\rm orth}$ fails in both scenarios---it cannot consistently disentangle the content and style information. This illustrates the importance of our key regularizer $\cL_{\rm orth}$ to enforce Assumption~\ref{assump:style_subspace}.

Fig.~\ref{fig:trans_mnist_app} reports additional translation results, where each row corresponds to the same content and each column corresponds to the same style.
Our method CSDI-GAN (with $\cL_{\rm orth}$) yields better-behaved latent factors and enables consistent translation across both axes.
In contrast, B.I. GAN and the unregularized variant of CSDI-GAN exhibit noticeable content-style entanglement and often fail to translate reliably. Overall, the proposed content-style differential independence regularization provides the most consistent and successful translations.

\begin{figure}[t]
    \centering
    \begin{minipage}[t]{0.39\linewidth}
        \centering
        \includegraphics[width=\linewidth]{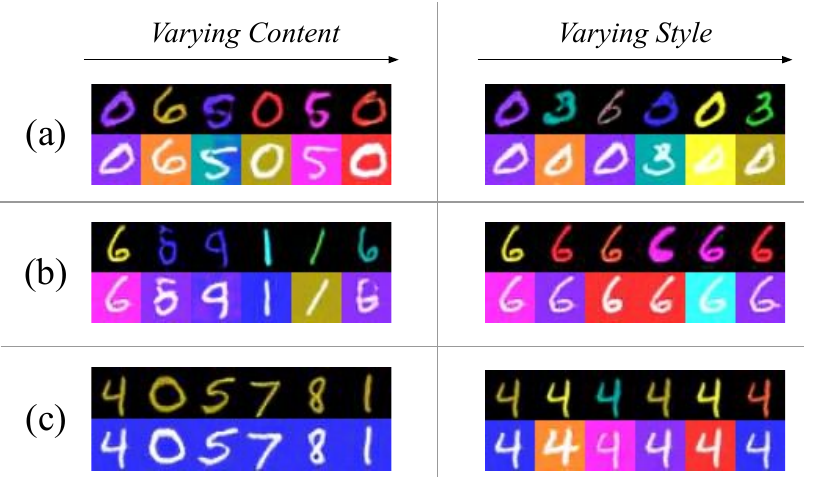}
        \caption{Generation task on two-domain MNIST with [Left] same $\widehat{\s}^{(n)}$ and varying ${\widehat{\bm c}}$, and [Right] same $
        \widehat{\bm c}$, varying $\widehat{\s}^{(n)}$: (a) CSDI-GAN w/o $\cL_{\rm orth}$ (FID: 27.45, LPIPS: 0.2674); (b) B.I. GAN (FID: 26.25, LPIPS: 0.2685); (c) CSDI-GAN w/ $\cL_{\rm orth}$ (\textbf{FID: 23.80}, \textbf{LPIPS: 0.3321}).} 
        \label{fig:gen_mnist_app}
    \end{minipage}\hfill
    \begin{minipage}[t]{0.59\linewidth}
        \centering
        \includegraphics[width=\linewidth]{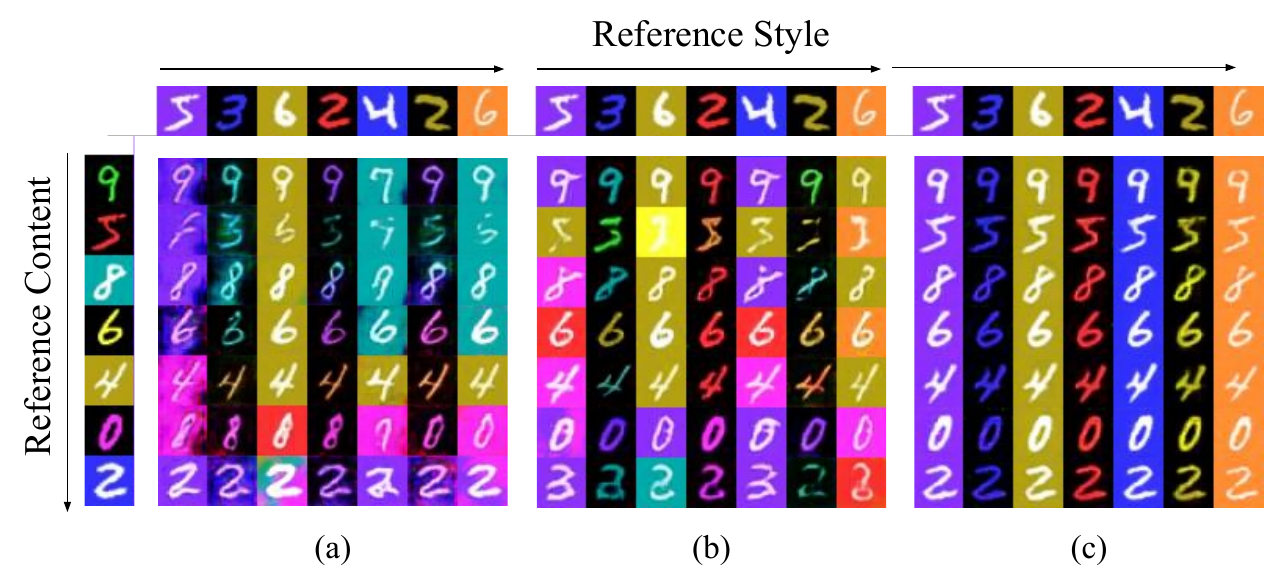}
        \caption{Translation task on two-domain MNIST data: (a) CSDI-GAN w/o $\cL_{\rm orth}$, (b) B.I. GAN, (c) Proposed -- CSDI-GAN w/ $\cL_{\rm orth}$`.}
        \label{fig:trans_mnist_app}
    \end{minipage}
\end{figure}

\textbf{Disentanglement Evaluation.} Ground-truth content $\bm c$ (digit identity) and style $\bm s^{(n)}$ (background color or digit color) are available in MNIST, we directly evaluate disentanglement using the $R^2$ score. We train a linear regressor to predict $\bm c$ from $\widehat{\bm c}$ and $\bm s^{(n)}$ from $\widehat{\bm s}^{(n)}$. As shown in Table~\ref{tab:mnist_r2_disentanglement}, CSDI achieves the highest style $R^2$ score while maintaining competitive content prediction performance. This suggests that the orthogonality constraint is crucial for accurately learning disentangled content and style representations.

\begin{table}[t]
\centering
\caption{Disentanglement evaluation on MNIST using the $R^2$ score between ground-truth and learned content/style representations. }
\label{tab:mnist_r2_disentanglement}
\begin{tabular}{l|cc}
\hline
Method & $R^2$ Content & $R^2$ Style (averaged across domains) \\
\hline
CSDI w/o $\mathcal{L}_{\rm orth}$ & 0.7324 & 0.5618 \\
BI & 0.7744 & 0.7338 \\
CSDI & 0.7675 & \textbf{0.8995} \\
\hline
\end{tabular}
\end{table}

\textbf{Parameter Sensitivity Analysis.}

\begin{table*}[t]
\centering

\begin{minipage}{0.48\textwidth}
\centering
\caption{Sensitivity analysis of the loss weights
\(\lambda_{\mathrm{inv}}\) and \(\lambda_{\mathrm{orth}}\) on the MNIST dataset.}
\label{tab:sensitivity-lambda}
\resizebox{\linewidth}{!}{
\begin{tabular}{c|cc|c|cc}
\hline
\(\lambda_{\mathrm{inv}}\) & FID & LPIPS
& \(\lambda_{\mathrm{orth}}\) & FID & LPIPS \\
\hline
0     & 28.5 & 0.30 & 0    & 27.4 & 0.27 \\
0.001 & 23.8 & 0.33 & 0.01 & 23.6 & 0.33 \\
1.0   & 24.1 & 0.32 & 1.0  & 23.8 & 0.33 \\
10.0  & 30.2 & 0.21 & 10.0 & 31.5 & 0.31 \\
\hline
\end{tabular}
}
\end{minipage}
\hfill
\begin{minipage}{0.48\textwidth}
\centering
\caption{Sensitivity analysis of the Jacobian-regularizer parameters
\(K\) and \(\epsilon\) on the MNIST dataset.}
\label{tab:sensitivity-jacobian}
\resizebox{\linewidth}{!}{
\begin{tabular}{c|cc|c|cc}
\hline
\(K\) & FID & LPIPS
& \(\epsilon\) & FID & LPIPS \\
\hline
2  & 26.1 & 0.32 & \(10^{-18}\) & 23.6 & 0.34 \\
4  & 24.5 & 0.33 & \(10^{-12}\) & 23.4 & 0.33 \\
8  & 23.8 & 0.33 & \(10^{-8}\)  & 23.8 & 0.33 \\
16 & 23.9 & 0.33 & \(10^{-4}\)  & 24.3 & 0.32 \\
32 & 23.7 & 0.34 & \(0.1\)      & 27.1 & 0.32 \\
\hline
\end{tabular}
}
\end{minipage}
\vspace{-1em}
\end{table*}

Each loss term in the objective \eqref{eq:data_dist_match} is designed to promote the identifiability of
the ground-truth latent variables, as characterized by our theoretical
analysis. We further examine the sensitivity of the proposed method on MNIST dataset to the hyperparameters parameters used in our objectives. In all experiments, we keep all other
hyperparameters fixed at their default values and vary only the parameter under study.

Table~\ref{tab:sensitivity-lambda} shows the sensitivity with respect to
\(\lambda_{\mathrm{inv}}\) and \(\lambda_{\mathrm{orth}}\). The results indicate
that the model is relatively stable over a moderate range of values. Setting
either regularization weight to zero degrades the FID, confirming the importance
of the corresponding loss terms. Very large regularization weights can also hurt
performance, suggesting that overly strong constraints may limit the flexibility
of the learned representation. In practice, the FID on the target domain can be
used as a simple criterion for selecting these parameters.

We also evaluate the sensitivity to the Jacobian-regularizer parameters \(K\)
and \(\epsilon\). As shown in Table~\ref{tab:sensitivity-jacobian}, performance
is generally best for moderate parameter choices. Increasing \(K\) beyond a
moderate value does not yield consistent gains while increasing computational
cost. Similarly, the method is stable for small values of \(\epsilon\), whereas
an overly large value of \(\epsilon\) degrades the FID.

\subsection{AFHQ and CelebA-HQ Experiments}\label{app:afhq_celebahq_details}
\noindent
\textbf{Datasets.}
We conduct experiments on \textbf{AFHQ}~\cite{choi2020starganv2} and \textbf{CelebA-HQ}~\cite{karras2018progressive} for both multi-domain image generation and image-to-image translation.
\textbf{AFHQ} contains animal-face images from 3 domains (\emph{cat}, \emph{dog}, and \emph{wild}) with $5153/4739/4738$ training images and $500/500/500$ test images, respectively.
\textbf{CelebA-HQ} consists of high-resolution human-face images with semantic attributes (e.g., gender, hair color, expressions, and so on).
We construct 2 domains based on gender, where the male and female domains contain $17{,}943$ and $10{,}057$ training images, and $1000$ images in each domain for testing.
For both datasets, we resize all images to $128 \times 128$ for training and evaluation.

\noindent
\textbf{Neural network architecture.} We implement the generator $\widehat{\bm g}$ and discriminator $\widehat{\bm d}^{(n)}$ following the StyleGAN2-ADA design \cite{karras2020stylegan2}.
The StyleGAN2-ADA generator consists of a mapping network and a synthesis network.
In our implementation, we represent $\widehat{\bm g}$ using only the \emph{synthesis} network and discard the mapping network, similar to \cite{shrestha2025content}.
Instead of a mapping network, we learn a content encoder $\bm e_C$ and a collection of style encoders $\{\bm e_S^{(n)}\}_{n=1}^N$.
Each encoder is a 3-layer MLP with 512 hidden units in each hidden layer.

We set $d_{C_1}=6$ and $d_{C_2}=494$, so the input dimension to $\bm e_C$ is $d_C=d_{C_1}+d_{C_2}=500$.
For each style encoder $\bm e_S^{(n)}$, we set $d_{S_1}=6$ and define the input dimension as
$d_S=d_{S_1}+d_{C_1}=12$.
To condition the synthesis network, we feed the output of $\bm e_C$ into the first five layers of the synthesis network, and inject the outputs of $\{\bm e_S^{(n)}\}_{n=1}^N$ into the remaining layers of $\widehat{\bm g}$.

We construct the networks $\bm t_C$ and $\{\bm t_S^{(n)}\}_{n=1}^N$ to mirror $\bm e_C$ and $\{\bm e_S^{(n)}\}_{n=1}^N$, respectively, using the same MLP architecture but with the input and output dimensions swapped.

\noindent
\textbf{Training hyperparameters.} Our GAN training setup largely follows the StyleGAN2-ADA recipe~\cite{karras2020stylegan2}. We optimize all networks using Adam~\cite{kingma2014adam} with learning rate $2.5\times 10^{-3}$ and batch size $32$,
and set the momentum parameters to $\beta_1=0$ and $\beta_2=0.99$.
Across all datasets, we run training for $200{,}000$ iterations. We set the hyperparameters of regularization term to be \(\lambda_{\mathrm{inv}} = 0.001\) and \(\lambda_{\mathrm{orth}} = 2.0\). For number of probes we set $K=4$ and $\epsilon = 10^{-8}$.

\textbf{Computational cost.} We analyze the computational overhead of CSDI-GAN relative to GAN-based baselines. 
We report the wall-clock training time 
and peak GPU memory usage in Table~\ref{tab:computational_overhead} for all methods over 720K iterations on AFHQ using a single 
V100 GPU. As shown in Table~\ref{tab:computational_overhead}, CSDI-GAN incurs a moderate 
increase in training time and memory usage compared to the baselines. This overhead mainly 
comes from the Jacobian regularization used to encourage content-style disentanglement. 
Importantly, this additional cost is accompanied by improved FID and LPIPS performance, 
suggesting a favorable trade-off between computational cost and generation quality.

\begin{table}[t]
\centering
\caption{Computational overhead comparison on AFHQ over 720K iterations.}
\label{tab:computational_overhead}
\begin{tabular}{l|cc}
\hline
Method & Training time (hours) & Peak GPU memory (GB) \\
\hline
StyleGAN & 23.11 & 9.38 \\
BI-GAN & 28.32 & 12.65 \\
I-GAN & 28.90 & 11.30 \\
CSDI-GAN & 30.09 & 14.20 \\
\hline
\end{tabular}
\end{table}

\subsubsection{Multi-domain Data Generation}
\label{app:generation}

We provide additional qualitative results for multi-domain generation.
Fig.~\ref{fig:data_gen_illus_afhq_1} compares generated images from CSDI-GAN and B.I. GAN \cite{shrestha2025content}, where each row is intended to share the same content and each column the same style.
CSDI-GAN produces visually cleaner samples and preserves the desired structure: it can maintain \emph{consistent style} while \emph{varying content}, and likewise maintain \emph{consistent content} while \emph{varying style}.
In contrast, B.I. GAN often fails to keep the sampled style consistent across different contents, leading to noticeable style drift and imperfect disentanglement.

Fig.~\ref{fig:data_gen_illus_afhq_2} further compares (a) CSDI-GAN without $\cL_{\rm orth}$ and (b) I-StyleGAN \cite{xie2022multi}.
Notably, removing our content-style differential independence regularizer $\cL_{\rm orth}$ degrades generation in both axes, indicating that the learned latents no longer reliably separate content from style.
Overall, these results highlight the importance of $\cL_{\rm orth}$ in stabilizing multi-domain generation and enforcing Assumption~\ref{assump:style_subspace}.

\begin{figure}[t!]
    \centering
        \begin{minipage}[t]{0.49\linewidth}
        \centering
        \includegraphics[width=\linewidth]{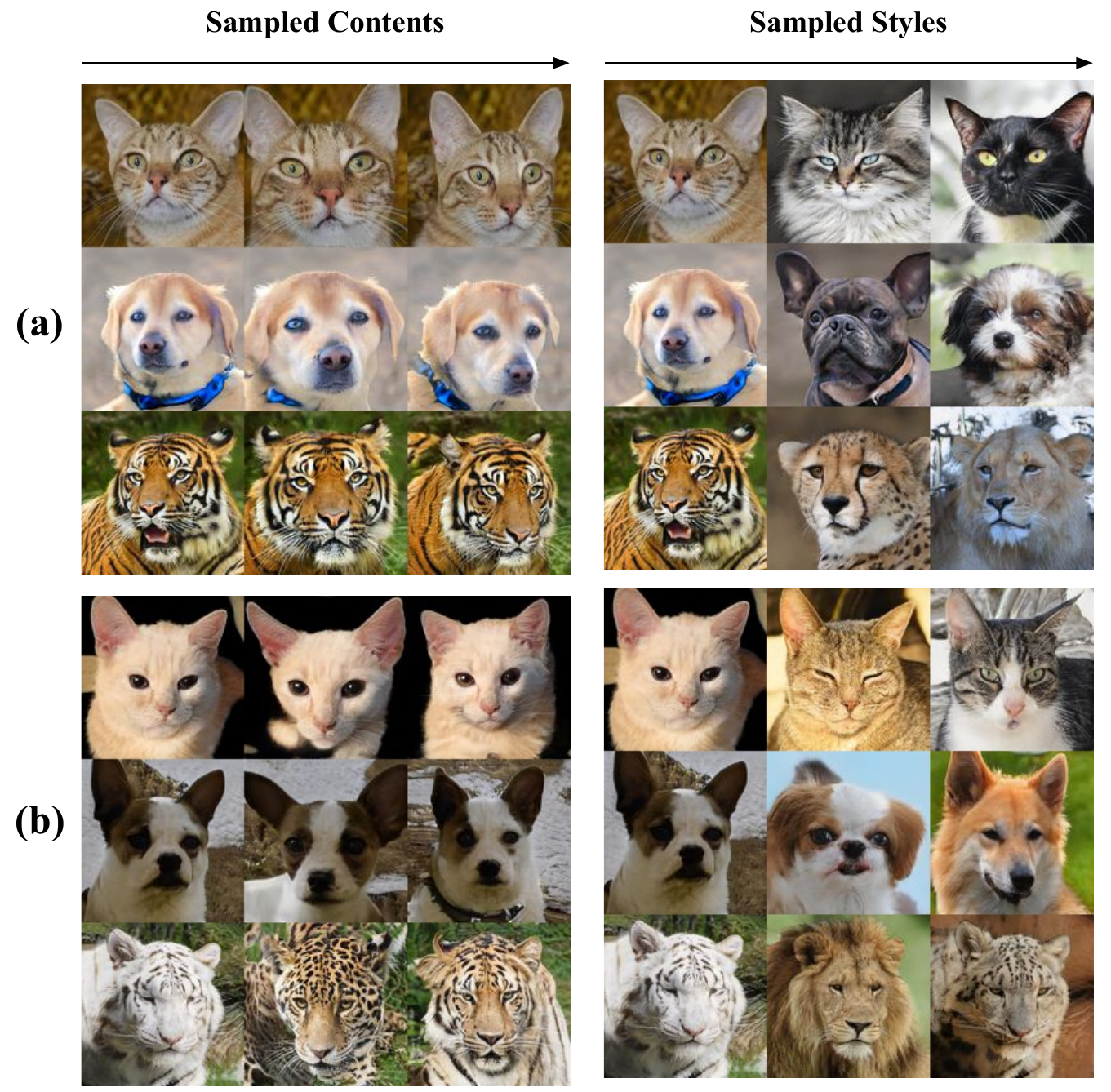}
        \caption{Generation task with AFHQ dataset. (a) Our CSDI-GAN with $\cL_{\rm orth}$ (b) B.I. GAN \cite{shrestha2025content}.}
        \label{fig:data_gen_illus_afhq_1}
    \end{minipage}\hfill
    \begin{minipage}[t]{0.49\linewidth}
        \includegraphics[width=\linewidth]{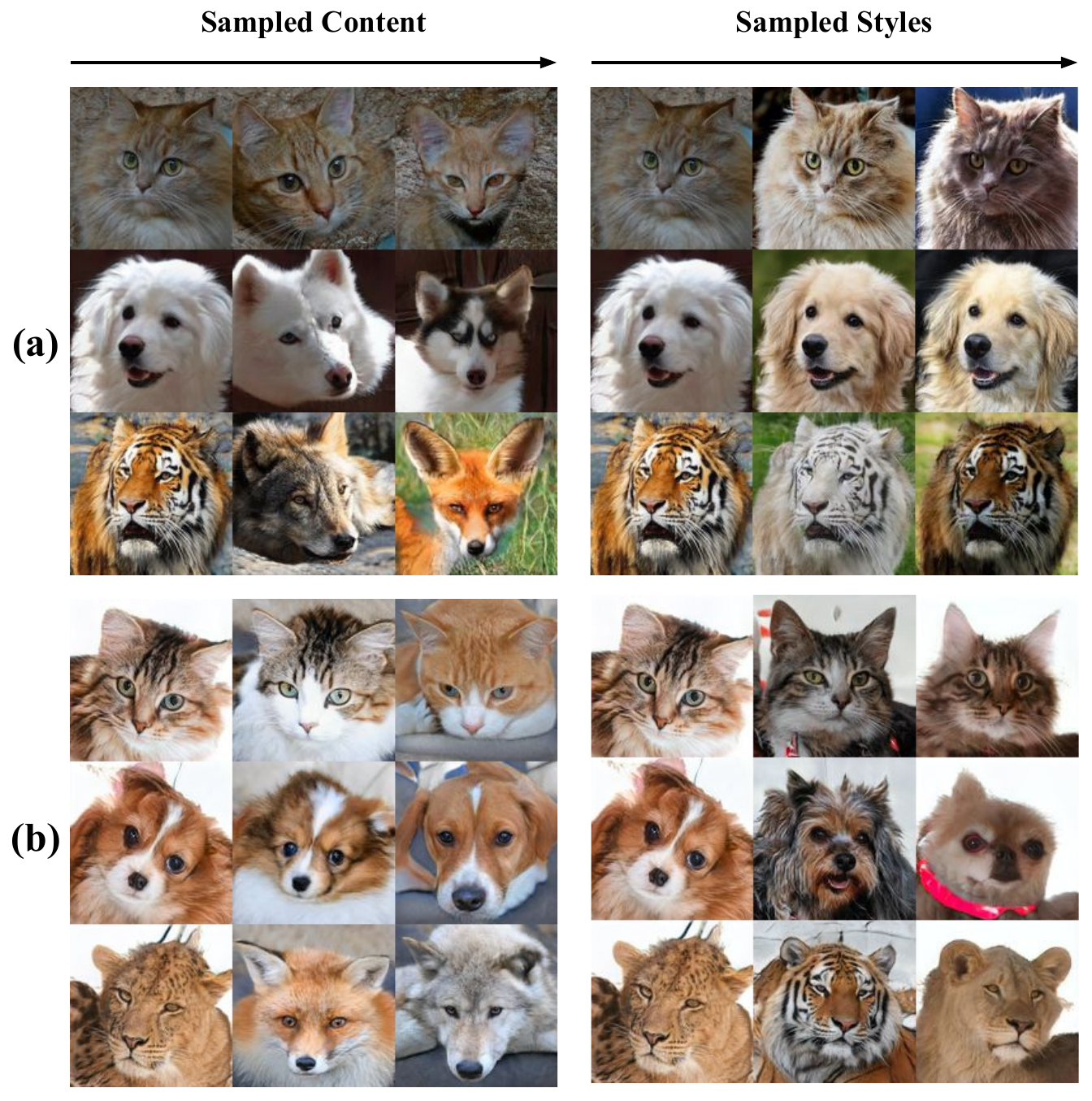}
        \caption{Generation task with AFHQ dataset. (a) CSDI-GAN w/o $\cL_{\rm orth}$ (b) I-StyleGAN \cite{xie2022multi}.}
        \label{fig:data_gen_illus_afhq_2}
    \end{minipage}
\end{figure}

Similarly, Fig.~\ref{fig:data_gen_illus_celebahq_1} compares generated images from CSDI-GAN and B.I. GAN \cite{shrestha2025content} under the same evaluation protocol (rows share content and columns share style).
CSDI-GAN better preserves the intended disentanglement, exhibiting clearer and more consistent changes when varying content or style.

Fig.~\ref{fig:data_gen_illus_celebahq_2} further reports multi-domain generation results for (a) I-StyleGAN \cite{xie2022multi} and (b) CSDI-GAN without $\cL_{\rm orth}$.
Overall, our full model yields more distinguishable and stable content/style variations, while the competing methods also produce reasonable results but with less consistent separation between the two factors.

\begin{figure}[t!]
    \centering
    \begin{minipage}[t]{0.49\linewidth}
        \centering
        \includegraphics[width=\linewidth]{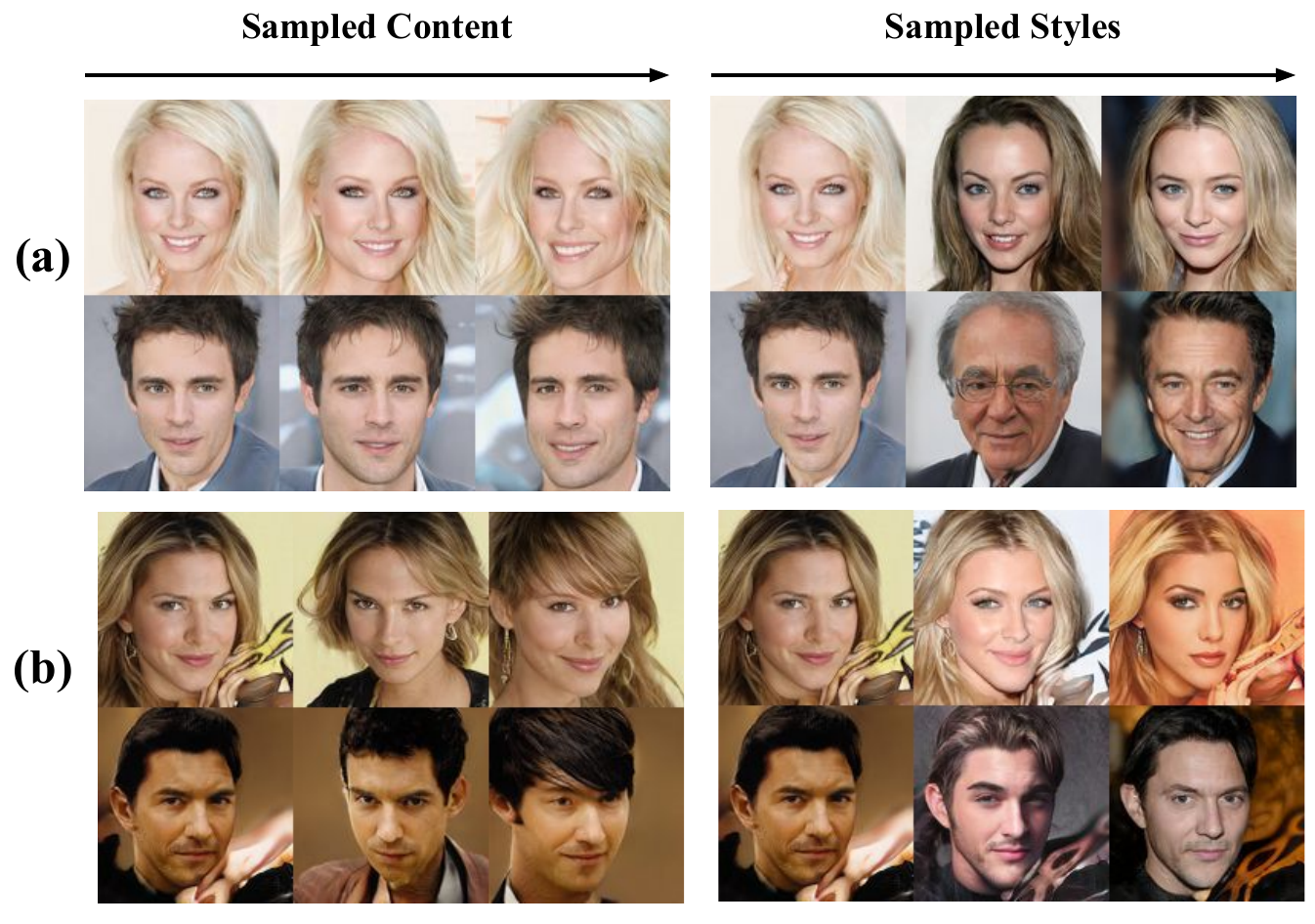}
        \caption{Generation task with CelebA-HQ dataset. (a)  Our CSDI-GAN with $\cL_{\rm orth}$ (b) B.I. GAN \cite{shrestha2025content}.}
        \label{fig:data_gen_illus_celebahq_1}
    \end{minipage}\hfill
    \begin{minipage}[t]{0.49\linewidth}
        \centering
        \includegraphics[width=\linewidth]{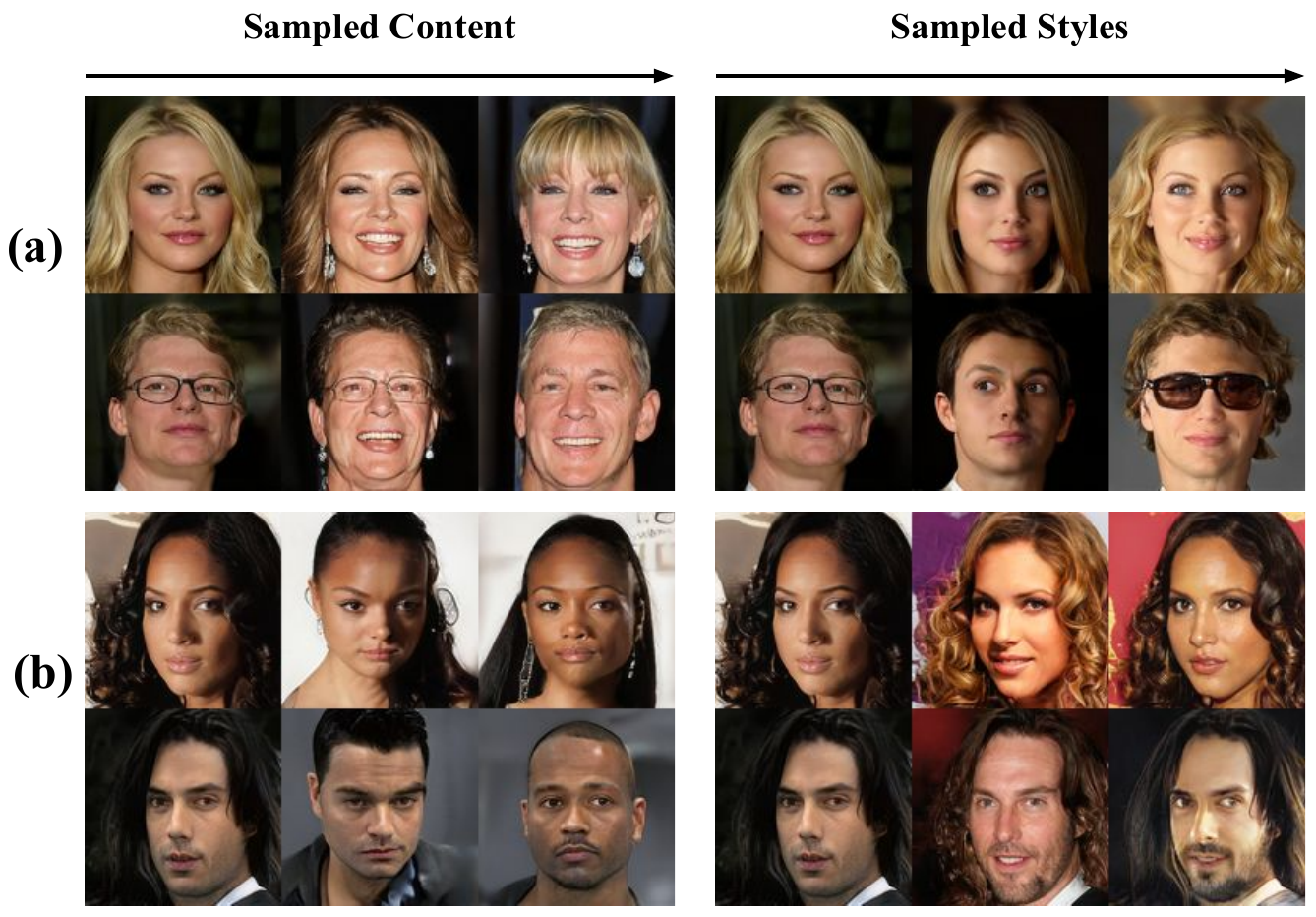}
        \caption{Generation task with CelebA-HQ dataset. (a) I-StyleGAN \cite{xie2022multi} (b) CSDI-GAN w/o $\cL_{\rm orth}$.}
        \label{fig:data_gen_illus_celebahq_2}
    \end{minipage}
\end{figure}

\end{document}

%% file: def.tex
\usepackage{steinmetz}

\usepackage{amsmath}
\usepackage{graphicx}

\renewcommand{\d}{\boldsymbol{d}}
\newcommand{\e}{\boldsymbol{e}}
\newcommand{\f}{\boldsymbol{f}}
\newcommand{\g}{\boldsymbol{g}}

\renewcommand{\r}{\boldsymbol{r}}
\newcommand{\s}{\boldsymbol{s}}

\renewcommand{\v}{\boldsymbol{v}}

\newcommand{\x}{\boldsymbol{x}}

\newcommand{\z}{\boldsymbol{z}}

\newcommand{\E}{\boldsymbol{E}}

\newcommand{\I}{\boldsymbol{I}}

\newcommand{\cA}{\mathcal{A}}
\newcommand{\cB}{\mathcal{B}}
\newcommand{\cC}{\mathcal{C}}

\newcommand{\cL}{\mathcal{L}}

\newcommand{\cN}{\mathcal{N}}

\newcommand{\cR}{\mathcal{R}}
\newcommand{\cS}{\mathcal{S}}

\newcommand{\cX}{\mathcal{X}}

\newcommand{\bbR}{ {\mathbb{R}} }
\newcommand{\bbE}{ {\mathbb{E}} }

\newcommand{\bbP}{ {\mathbb{P}} }

\usepackage{textcomp}

\makeatletter
\newcommand{\vast}{\bBigg@{4}}
\newcommand{\Vast}{\bBigg@{5}}
\makeatother

\usepackage{xcolor}
\usepackage{psfrag,framed}
\usepackage{lipsum}
\PassOptionsToPackage{normalem}{ulem}
\usepackage{ulem}

\definecolor{shadecolor}{RGB}{220,220,220}
\usepackage{caption}
\usepackage{subcaption}